\numberwithin{equation}{section}
\theoremstyle{plain}
\newtheorem{theorem}{Theorem}[section]
\newtheorem{proposition}[theorem]{Proposition}
\newtheorem{lemma}[theorem]{Lemma}
\theoremstyle{definition}
\newtheorem{definition}[theorem]{Definition}
\newtheorem{example}[theorem]{Example}
\theoremstyle{remark}
\numberwithin{equation}{section}
\def\thetable{\thesection.\@arabic\c@table}
\def\fps@table{h, t}
\providecommand{\norm}[1]{\lVert#1\rVert}
\newcommand{\bfi}{\bfseries\itshape}
\newcommand{\vertiii}[1]{{\left\vert\kern-0.25ex\left\vert\kern-0.25ex\left\vert #1 
    \right\vert\kern-0.25ex\right\vert\kern-0.25ex\right\vert}}
\newsavebox{\savepar}
\newcommand\reallywidehat[1]{%
\savestack{\tmpbox}{\stretchto{%
  \scaleto{%
    \scalerel*[\widthof{\ensuremath{#1}}]{\kern-.6pt\bigwedge\kern-.6pt}%
    {\rule[-\textheight/2]{1ex}{\textheight}}
  }{\textheight}%
}{0.5ex}}%
\stackon[1pt]{#1}{\tmpbox}%
}
\newcommand{\E}{\mathbb{E}}
\newcommand{\diag}{{\text{diag}}}
\newcommand{\R}{\mathbb{R}}
\newcommand{\Z}{\mathbb{Z}}
\newcommand{\N}{\mathbb{N}}
\newcommand{\M}{\mathbb{M}}
\begin{document}
\title{\textbf{%
    Memory of recurrent networks:\\ Do we compute it right?
}}
\author{Giovanni Ballarin$^{1}$, Lyudmila Grigoryeva$^{2,3}$, Juan-Pablo Ortega$^{4}$}
\date{\today}
\maketitle

\begin{abstract}
    Numerical evaluations of the memory capacity (MC) of recurrent neural networks reported in the literature often contradict well-established theoretical bounds. In this paper, we study the case of linear echo state networks, for which the total memory capacity has been proven to be equal to the rank of the corresponding Kalman controllability matrix. We shed light on various reasons for the inaccurate numerical estimations of the memory, and we show that these issues, often overlooked in the recent literature, are of an exclusively numerical nature. More explicitly, we prove that when the Krylov structure of the linear MC is ignored, a gap between the theoretical MC and its empirical counterpart is introduced. As a solution, we develop robust numerical approaches by exploiting a result of MC neutrality with respect to the input mask matrix. Simulations show that the memory curves that are recovered using the proposed methods fully agree with the theory.
\end{abstract}

\bigskip

\textbf{Key Words:} reservoir computing, linear recurrent neural networks, echo state networks, memory capacity, Krylov iterations  \\


\makeatletter
\addtocounter{footnote}{1} \footnotetext{ 
Department of Economics, University of Mannheim, L7, 3-5, Mannheim, 68131, Germany. {\texttt{Giovanni.Ballarin@gess.uni-mannheim.de} }}
\addtocounter{footnote}{1} \footnotetext{%
Faculty of Mathematics and Statistics, University of St. Gallen, Bodanstrasse 6, CH-9000 St.~Gallen, Switzerland. {\texttt{Lyudmila.Grigoryeva@unisg.ch} }}
\addtocounter{footnote}{1} \footnotetext{%
Honorary Associate Professor, Department of Statistics, University of Warwick, Coventry CV4 7AL, UK. {\texttt{Lyudmila.Grigoryeva@warwick.ac.uk} }} 
\addtocounter{footnote}{1} \footnotetext{%
Division of Mathematical Sciences, School of Physical and Mathematical Sciences,
Nanyang Technological University,
21 Nanyang Link,
Singapore 637371.
{\texttt{Juan-Pablo.Ortega@ntu.edu.sg}}}
\makeatother

\newpage
\setcounter{footnote}{0} 

\tableofcontents
\newpage

\section{Introduction}
\label{Introduction}
Recurrent Neural Networks (RNNs) are among the most widely used machine learning tools for sequential data processing \citep{Sutskever2014}. Despite the rising popularity of transformer deep neural architectures \citep{Vaswani2017, galimberti2022designing, anastasis:hypertransformer}, in particular, in natural language processing, RNNs remain more suitable in a significant range of real-time and online learning tasks that require handling one element of the sequence at a time. The key difference is that transformers are designed to process entire time sequences at once, using self-attention mechanisms to focus on particular entries of the input, while RNNs use hidden state spaces to retain a memory of previous elements in the input sequence, which makes memory one of the most important features of RNNs. Multiple attempts have been made in recent years to design quantitative measures and characterize memory in neural networks in general~\citep{Vershynin2020,Koyuncu2023} and their recurrent versions, in particular, \citep{Haviv2019,Li2021}. 

The notion of {\bfi memory capacity} (MC) in recurrent neural networks was first introduced in~\cite{Jaeger:2002}, with a particular focus on the so-called echo state networks (ESNs) \citep{Matthews:thesis, Matthews1994, Jaeger04}, which are a popular family of RNNs within the reservoir computing (RC) strand of the literature that have shown to be universal approximants in various contexts \citep{RC7, RC8, RC20}. RC models are state-space systems whose state map parameters are randomly generated and which can be seen as RNNs with random inner neuron connection weights and a readout layer that is trained depending on the learning task of interest. Memory capacity has been proposed as a measure of the amount of information stored in the states of a state-space system in relation to past inputs. It has been commonly accepted as a valuable metric to evaluate the network's ability to store and extract important information from processed input signals over time. Extensive work has been done in the reservoir computing literature both in the setting of linear~\citep{Hermans2010, dambre2012, esn2014, linearESN, Goudarzi2016, Xue2017}, echo state shallow \citep{White2004,farkas:bosak:2016, Verzelli2019a}, and deep architectures~\citep{bruges:deepRC,gallicchioShorttermMemoryDeep2018}. Memory capacity definitions exploited extensively in the literature are based on a natural observation that the ability of the network to memorize previous inputs can be quantitatively assessed by the correlation between the outputs of the network and its past inputs. Originally, independent and identically distributed input sequences were used for these measurements and only some recent references discuss the case of temporary dependent inputs (for example, \citealt{dambre2012,charles2014short,RC4pv,Charles2017,marzen:capacity,RC15}).  Proposals of other memory measures have also been discussed in the literature, with Fischer information-based criteria \citep{Ganguli2008, Tino2013, Livi2016, tino:symmetric} among those.

Over the past few years, a series of papers presented analytical expressions for the capacity of time-delay reservoirs \citep{GHLO2014_capacity, RC3}. The main interest of memory measures, in general, and capacities, in particular, is related to their use in architecture design. Once the memory capacity expression as a function of the network (hyper-)parameters is available, one could use it in order to design memory-optimal network architectures. This seemed to be especially important for nontrainable random connectivity neural networks, where the choice of sampling and network structure can be informed by maximizing network capacities. This direction was pursued in numerous studies, with many of those focusing on linear and echo state networks \citep{pesquera2012, GHLO2014, ortin2019tackling, ortin2020delay}.

In this paper, we place ourselves in the setting of linear recurrent neural networks. A recent contribution in the literature in this framework is \cite{RC15}. Interestingly, it is proved in this reference that linear systems with white noise inputs (not necessarily independent) and non-singular state autocovariance matrices {\it automatically have full memory capacity}, which coincides with the dimension of the state space or, equivalently, the number of neurons in the hidden layer. Moreover, while  \cite{Jaeger:2002} shows that the memory capacity is maximal if and only if Kalman's controllability rank condition \citep{Kalman2010, sontag1991kalman, sontag:book} is satisfied, \cite{RC15} also proves that the memory capacity of linear systems is given {\it exactly by the rank of the Kalman controllability matrix}. These results {\bfi contradict numerous studies} in the literature that report empirical evaluations of the memory capacity of linear recurrent networks inconsistent with the result in \cite{RC15}. We shall use the term {\bfi linear memory gap} to denote the difference between empirically measured memory capacities of linear networks and their provable theoretical values. To the best of our knowledge, this paper is the first to shed light on the nature of this incoherence. We argue that the memory gap originates from pure numerical artifacts overlooked by many previous studies and propose robust techniques that allow for accurate estimation of the memory capacity, which renders full memory results for linear RNNs in agreement with the well-known theoretical results. We claim that multiple efforts in the literature to optimize the memory capacity of linear recurrent networks are hence afflicted by numerical pathologies and convey misleading results.

Specific numerical issues that arise at the time of memory computation and which, as we explain later, are attributed to the ill-conditioning of Krylov matrices, were noticed by some authors in empirical experiments. However, no rigorous explanation has been found so far. Instead, the literature has been developing in the following two directions: the first one designs specific network architectures that are not susceptive to these phenomena, and the second one tunes the hyperparameters to achieve empirical capacity maximization for a given network architecture.

The first research strand finds configurations for which the memory gap is absent or minimal. For example, for ESNs with nonlinear hyperbolic tangent activation, based on empirical insights, \cite{farkas:bosak:2016} proposes an orthogonalization process that improves memory capacity evaluation in simulations (similar ideas in the vein of orthogonal neural networks are also developed in \citealt{White2004}). \cite{ogESN2012} provides designs for ESN reservoir matrices, called RingOfNeurons and ChainOfNeurons, that are inspired by rotation matrices and are based on the memory capacity maximization idea. Full memory of the delay-line and cyclic reservoirs has also been reported in \cite{Rodan2011, rodanSimpleDeterministicallyConstructed2012}. \cite{Tino2013} contribute to the same direction characterizing the MC for a particular type of reservoir connectivity, namely symmetric reservoir matrices. In this paper, we rigorously show how and why particular choices of connectivity matrices in the linear setting surpass the ill-conditioning problem and hence exhibit no memory gap by construction.

The second strand of the literature focuses on the question of whether some hyperparameter choices may maximize the memory capacity of the network. We find that many of these contributions propose explanations of the empirical findings that, in the light of \cite{RC15}, are not always entirely correct. In particular, they focus on hyperparameters or sampling distributions of the state map matrix parameters (within the family of regular laws) that have provably no effect on the memory capacity. For example, \cite{gallicchioSparsityReservoirComputing2020} makes a numerical argument for the sparsity in ESN reservoir matrices, since it claims that it maximizes memory and the ``effective dimension'' of the state space. This claim is based on a numerical artifact that is mainly due to the different spectral properties of random matrix ensembles of different sparsity degrees. Another example is \cite{Aceituno2017} which studies the average eigenvalue modulus of the reservoir matrix as a proxy for memory and suggests, in particular, using circulant matrices to maximize memory. 

We conclude this brief literature review by mentioning a few references, which, in our view, are the closest to obtaining a satisfactory explanation of the memory gap phenomenon.  \cite{whiteakerMemoryEchoState2022} correctly identifies the importance of the controllability matrix rank, even though it considers a nonlinear setting. Some intuitive links between the rank of the controllability matrix and memory capacity are discussed in \cite{verzelliInputtoStateRepresentationLinear2021} and \cite{Whiteaker2022}. MC is studied in that paper through simulations that yield an incorrect conclusion as to the imperfect memory in some linear ESN (LESN) designs. Finally, \cite{Hermans2010} studies memory in the case of continuous-time models and contains a version of the result of input mask neutrality that we present later in the paper.

This paper contains two main contributions. First, we address the methods of empirical memory estimation commonly exploited in the literature. In particular, while \cite{RC15} together with \cite{RC21} prove that $N$-dimensional linear state-space or RNN systems with randomly sampled matrix parameters of the state map have almost surely full memory of $N$, both Monte Carlo simulation and algebraic techniques, which we call {\it na\"ive}, exhibit numerical issues and lead either to over- or underestimation of the memory capacity in this setting. Unfortunately, these approaches were followed in many studies and led to the devising of recommendations for optimal random architectures based on numerical pathologies, which we discussed in previous paragraphs. Second, the insight into numerical issues led us to develop numerically robust algorithms for memory capacity evaluations. One of the results that we derive is the so-called neutrality of memory capacity to input mask, which we build upon in order to propose a numerically stable memory capacity empirical evaluation scheme using subspace methods. We call these newly introduced techniques the {\bfi orthogonalized subspace} and {\bfi averaged orthogonalized subspace} methods. We hope with our proposal to give closure to a long line of contributions in the literature that attempts to maximize capacities that are almost surely and provably full, to begin with.

The paper is structured as follows. In Section~\ref{section:naive} we present memory capacity estimation approaches as currently used in the literature of reservoir computing and, specifically, ESN models. We highlight the main issues that arise with these methods, which have lead to significant efforts to seemingly maximize memory properties of linear echo state networks. In Section~\ref{section:subspace} we present our new method based on linear subspaces induced by the Krylov structure of the controllability matrix. This approach recovers the theoretical memory properties of LESNs and is immediate to implement; we also proposed an improved version that relies on a novel result of the invariance of memory capacity with respect to the choice of the input mask. We conclude with Section~\ref{section:conclusion}.

\subsection{Code} 

All codes necessary to reproduce numerical results presented in the paper are publicly available at \url{https://github.com/Learning-of-Dynamic-Processes/memorycapacity}.

\subsection{Notation} 

Column vectors are denoted by bold lowercase symbols like $\mathbf{r}$. Given a vector $\mathbf{v} \in \mathbb{K}  ^n $, we denote its entries by $v_i$, with $i \in \left\{ 1, \dots, n
\right\} $.  
We denote by $\mathbb{M}_{n ,  m }$ the space of $\mathbb{K}$-valued $n\times m$ matrices with $m, n \in \mathbb{N} $. The choice of $\mathbb{K}$ is either $\mathbb{C}$ or $\mathbb{R}$, which will be clear from the context. When $n=m$, we use the symbol $\mathbb{M}  _n $ to refer to the space of square matrices of order $n$. Given a vector $\mathbf{v} \in {\mathbb{K}}  ^n $, we denote by ${\rm diag} (\mathbf{v})$ the diagonal matrix in $\mathbb{M}  _n $ with the elements of $\mathbf{v}  $  as diagonal entries. Given a matrix $A \in \mathbb{M}  _{n , m} $, we denote its components by $A _{ij} $ and we write $A=(A_{ij})$, with $i \in \left\{ 1, \dots, n\right\} $, $j \in \left\{ 1, \dots m\right\} $. Given a vector $\mathbf{v} \in \mathbb{R}  ^n $, the symbol $\| \mathbf{v}\|  $ stands for its Euclidean norm. For any $A \in \mathbb{M}  _{n , m} $,  $\|A\|   $ denotes its matrix norm induced by the Euclidean norms in $\mathbb{K}^m $ and $\mathbb{K} ^n $,  and satisfies that $\|A\| =\sigma_{{\rm max}}(A)$, with $\sigma_{{\rm max}}(A)$  the largest singular value of $A$  \citep{horn:matrix:analysis}. Whenever $\mathbb{K}=\mathbb{C}$, for  $A\in \mathbb{M}  _{n , m}$, we denote by $A^\ast \in \mathbb{M}  _{m , n}$ its conjugate transpose defined by
$
\left({A}^{\ast}\right)_{i j}=\overline{{A}_{j i}},
$
where the bar denotes the complex conjugate. For $A\in \mathbb{M}  _{n , m}$, $A^\top$ denotes its transpose, while $\mathcal{C}(A)\subset \mathbb{K}^n$ and $\mathcal{C}(A^\top)\subset \mathbb{K}^m$ are its column and row spaces, respectively.

\section{Linear Memory Capacity}
\label{section:naive}

Consider the linear echo state network (LESN) defined by the following two equations:
\begin{align}
	\mathbf{x}_t & = {A} \mathbf{x}_{t-1} + {C} \mathbf{z}_t + \boldsymbol{\zeta} \label{eq:ESN_def_1} , \\
	\mathbf{y}_{t} & = {W}^\top \mathbf{x}_t , \label{eq:ESN_def_2}
\end{align}
for $t \in \Z_-$, where $ \mathbf{z} \in (\R^d)^{\Z_-}$ are the inputs, $ \mathbf{x} \in (\R^N)^{\Z_-}$ are the states, and $ \mathbf{y} \in (\R^m)^{\Z_-}$ are the outputs,  $d, m, N \in \N$. The states in \eqref{eq:ESN_def_1} are defined using the {\bfi  reservoir (connectivity) matrix} $A\in \mathbb{M}_{N}$, the {\bfi  input mask} $C\in \mathbb{M}_{N, d}$, and the {\bfi input shift} $\bm{\zeta} \in \R^N$, and are mapped to the outputs via the affine readout map with associated {\bfi readout weights matrix} ${W}\in \mathbb{M}_{N,m}$ which can be adjusted to incorporate the intercept term. In the rest of the paper, we consider one-dimensional inputs and outputs and hence use bold symbols $\mathbf{C}, \mathbf{W} \in {\Bbb R}^N$ to denote the input mask and the readouts vectors, respectively. 

We shall focus on state-space systems of the type \eqref{eq:ESN_def_1}-\eqref{eq:ESN_def_2} that determine an {\bfi input/output} system. This happens in the presence of the so-called {\bfi  echo state property (ESP)}, that is, when for any  $ \boldsymbol{z} \in \R^{\Z_-}$ there exists a unique $ \mathbf{y} \in \R^{\Z_-}$ such that \eqref{eq:ESN_def_1}-\eqref{eq:ESN_def_2} hold. One can require that the ESP holds only on the level of the state equation, that is that for any  $ \mathbf{z} \in (\R^d)^{\Z_-}$ there exists a unique $ \mathbf{x} \in (\R^N)^{\Z_-}$ such that \eqref{eq:ESN_def_1} holds. In Proposition 4.2 in \cite{RC16} it is proved  that the state equation associated to \eqref{eq:ESN_def_1} has a unique state-solution  $ \mathbf{x} \in \ell_-^\infty(\mathbb{R}^N)$ for each input in ${\bf z} \in \ell_-^\infty(\mathbb{R})$ (we call this property the $(\ell_-^\infty(\mathbb{R}^N), \ell_-^\infty (\mathbb{R}))$-ESP) if and only if the spectral radius of $A$ is strictly smaller than $1$, that is $\rho(A)<1$. We recall that the inputs $\mathbf{z}\in\ell_-^\infty (\mathbb{R})$ and the inputs $\mathbf{x}\in\ell_-^\infty(\mathbb{R}^N)$ are the left-infinite $\mathbb{R}^N$- and $\mathbb{R}$-valued sequences, respectively, with finite supremum norm $\|\cdot\|_{\infty}$, that is $\|\mathbf{z}\|_{\infty} := \sup_{t\in \Z_-}\{|z_t|\}<\infty$ and $\|\mathbf{x}\|_{\infty} := \sup_{t\in \Z_-}\{\|\mathbf{x}_t\|\}<\infty$ with $\|\cdot\|$ the Euclidean norm. Under the hypothesis $\rho(A)<1$, the unique solution $ \mathbf{x} \in \ell_-^\infty(\mathbb{R}^N)$ of \eqref{eq:ESN_def_1} associated to the input $ \mathbf{z} \in \R^{\Z_-}$ is given by the series
\begin{equation}
	\label{solution state LESN}
	\mathbf{x} _t=\sum_{j=0} ^{\infty}A ^j\mathbf{C} {z}_{t-j}, \quad \mbox{$t \in \mathbb{Z}_{-}$.} 
\end{equation}

In this paper, we consider inputs that are realizations of variance-stationary discrete-time stochastic $\mathbb{R}$-valued processes  $ \mathbf{z}=\left( z_t\right)_{t\in \Z_-}$. Additionally, since we study only the memory reconstruction information processing tasks, the target process $\mathbf{y}$ is a forward-shifted version of the input process $\mathbf{z}$. In the stochastic setting, one can show that the same condition $\rho(A)<1$ is sufficient for the almost sure unique existence of a solution of \eqref{eq:ESN_def_1}. More explicitly, if $\rho(A)<1$ and the input process ${\bf z} $ is such that $\operatorname{Var}(z _t)< c $ for all $t \in \mathbb{Z}_{-} $ and a finite constant $c >0 $, then there exists an a.s. unique sequence of random variables $\mathbf{x} $ such that 
\begin{equation}
	\label{limit for finite variance}
	\sum_{j=0} ^{T}A ^j\mathbf{C} {z}_{t-j}\xrightarrow[T \rightarrow \infty]{ L^2} \mathbf{x} _t, \quad \mbox{$t \in \Bbb Z_- $}.
\end{equation}
This statement is a corollary of \cite{luetkepohl:book}, Proposition C.9, which requires the absolute summability of the sequence $\left\{A ^j \mathbf{C}\right\}_{j \in \mathbb{N}} $ which is, in turn, a consequence of the hypothesis $\rho(A)<1$ and part (i) of Proposition 4.2 in \cite{RC16}. Additionally, a proof similar to the one of Proposition 4.1 in \cite{RC15} guarantees that  if ${\bf z}  $ is variance stationary, then so is $\mathbf{x}  $. Statements of this type in which the $L ^2 $ convergence is replaced by metric convergence in the Wasserstein space can be found in \cite{RC27}.

In contrast to conventional recurrent neural networks, where all the network weights (parameters) are subject to training,  the parameters of the state equations of reservoir systems are \textit{fixed}, and exclusively the readout map is estimated based on the learning task of interest. More explicitly, within the reservoir computing paradigm, in the case of LESN, the matrix parameters ${A}$, $\mathbf{C}$ and $\boldsymbol{\zeta}$ are sampled randomly from (matrix) probability distributions prescribed a priori and $\mathbf{W}$ is estimated. The choice of the law and the properties of these parameters are known to have a significant impact on the performance of the ESN in practical applications. 

\subsection{Memory Capacity} 

The notion of memory capacities (MCs) has been introduced in~\cite{Jaeger:2002} in the context of recurrent neural networks and echo state networks \citep{Matthews:thesis, Matthews1994, Jaeger04}  as a way to measure the amount of information contained in the states of a  state-space system about the past inputs and to characterize the ability of the network to extract the dynamic features of processed signals. Following \cite{RC15}, given a variance-stationary input stochastic process $ \mathbf{z}=\left( z_t\right)_{t\in \Z_-}$, a state map that satisfies the ESP, and the associated variance-stationary state process $ \mathbf{x}=\left( \mathbf{x}_t\right)_{t\in \Z_-}$, $\mathbf{x}\in \mathbb{R}^N$, the $\tau$-lag memory capacity  of the state-space system with respect to $\mathbf{z}$, with $\tau \in \mathbb{N}$, is defined as
\begin{equation}\label{eq:MC_definition}
	\text{MC}_\tau := 
	1 - \frac{1}{\text{Var}(z_t)} 
	\min_{\mathbf{W} \in \mathbb{R}^{N}} 
	\mathbb{E} 
	\left[ \,
	\left({{z}_{t-\tau} - \mathbf{W}^\top \mathbf{x}_t} \right)^2
	\right],
\end{equation}
where we will often use that $\text{Var}(z_t) = \gamma(0)$ with $\gamma: \mathbb{Z}\mapsto \mathbb{R}$ being the autocovariance function of $\mathbf{z}$.

The {\bfi  total memory capacity} of an ESN is then given by the sum of the capacities at all lags, that is,
\begin{equation}
	\text{MC} := \sum_{\tau = 0}^\infty \text{MC}_\tau .
\end{equation}
It is important to underline that in contrast to what is sometimes defined in the literature (for example, \citealt{Rodan2011}), our definition of $\text{MC}$ {includes} lag 0. We are interested in the complete history of the process $z_t$ embedded in the states $\mathbf{x}_t$, including the present. This is consistent with the fact that a LESN where $A = \mathbb{O}_N$ has no memory of inputs at lags $\tau > 0$, but if $N = 1$, it still retains maximal memory as long as $\mathbf{C} \not= \mathbf{0}$. By definition, $\textnormal{MC}_\tau$ measures how much of the variance of input $z_{t- \tau}$ can be linearly reconstructed from the states $\mathbf{x}_t$. The higher $\textnormal{MC}_\tau$ is for large $\tau$, the longer the states contain the past history of a sequence of inputs. 

Under the assumption that ${\Gamma}_{\mathbf{x}} := \text{Var}(\mathbf{x}_t)$ is non-singular, $\textnormal{MC}_\tau$ has the closed-form expression (see Lemma 3.2, \citealt{RC15})
\begin{equation}\label{eq:MC_tau_covvar}
	\text{MC}_\tau =
	\frac{\text{Cov}(z_{t-\tau}, \mathbf{x}_t) {\Gamma}_{\mathbf{x}}^{-1} \text{Cov}(\mathbf{x}_t,z_{t-\tau})}
	{\text{Var}(z_t)}, \enspace \tau\in \mathbb{N}.
\end{equation}

\begin{example}[{\bf Delay reservoir}] \normalfont
	Consider the delay (or Takens) reservoir given by the reservoir matrix whose only non-zero elements are $A_{ij}=1$ for all $j\in \{1,\ldots, N-1\}$, $i=j+1$ (this is usually called a shift matrix), the input mask $\mathbf{C}$ whose all elements are zero except for the first one which is set to one, and the zero input shift $\boldsymbol{\zeta}$. In this case, for any $t\geq N$
	\begin{equation*}
		\boldsymbol{x}_t = \left( \begin{array}{c}
			z_{t} \\
			\vdots \\
			z_{t-N}
		\end{array}\right)
	\end{equation*}
	and $\textnormal{MC}_\tau = 1$ for $\tau \in \{0, \ldots, N\}$ while $\textnormal{MC}_\tau = 0$ for all $\tau \geq N + 1$. 
\end{example}

\subsubsection{Fischer Memory}

An alternative concept developed in the literature that pertains to the memory features of recursive neural networks is that of the {\bfi  Fischer memory curve} (FMC). The idea is introduced in \cite{Ganguli2008} and consists in quantifying the impact of small variations on the current state $\boldsymbol{x}_t$. More precisely, assume that in \eqref{eq:ESN_def_1} the states are perturbed by i.i.d. noise $(\boldsymbol{\epsilon}_t)_{t \in\mathbb{Z}_-}$ and that $\boldsymbol{\zeta} = \mathbf{0}$. The state equation then reads
\begin{equation*}
	\boldsymbol{x}_t = {A} \boldsymbol{x}_{t-1} + \mathbf{C} z_t + \boldsymbol{\epsilon}_t \label{eq:ESN_state_noise} .
\end{equation*}
The Fischer memory matrix is given by
\begin{equation*}
	\textnormal{F}_{i,j}((z_t)_{t \in\Z_-}) := - \E_{p(\mathbf{x}_t \vert (z_t)_{t \in\Z_-})} \left[ \frac{\partial^2 \log\big( p(\mathbf{x}_t \vert (z_t)_{t \in\Z_-}) \big)}{\partial z_{t-i+1} \, \partial z_{t-j+1}} \,  \right] ,
\end{equation*}
where $p(\boldsymbol{x}_t \vert (z_t)_{t \in\Z_-})$ is the input-conditional state distribution, and the Fischer memory curve is given by its diagonal entries, $\textnormal{F}_\tau \equiv \textnormal{F}_{\tau+1,\tau+1}$ for $\tau \geq 0$. Assuming that $\boldsymbol{\epsilon}_t$, for all $t\in \Z_-$, are mean-zero Gaussian distributed with variance $\sigma^2_\epsilon \mathbb{I}_N$,  one obtains (see the detailed derivations in \citealt{Ganguli2008,Tino2013}) that  $p({\mathbf{x}}_t \vert (z_t)_{t \in\Z_-})$ is Gaussian with the covariance matrix
\begin{equation*}
	R_{\mathbf{x}} = \sigma^2_\epsilon \sum_{j=0}^\infty A^j (A^\top)^j,
\end{equation*}
and hence the FMC can be written as
\begin{equation*}
	\textnormal{F}_\tau = \mathbf{C}^\top (A^\top)^\tau R_{\mathbf{x}}^{-1} A^\tau \mathbf{C}.
\end{equation*}
One may easily notice that this formula does \textit{not} depend on input $\mathbf{z}$ and measures memory based only on the architecture properties of the state-space system.

\subsubsection{Relation Between Memory Capacities and Fischer Memory} 

The relation between these two notions of memory is not straightforward. 
Theorem 1 in \cite{Tino2013} shows that 
\begin{equation*}
	\textnormal{MC}_\tau = \sigma^2_\epsilon \textnormal{F}_\tau + \mathbf{C}^\top (A^\top)^\tau O^{-1} A^\tau \mathbf{C} ,
\end{equation*}
where $O = {\Gamma}_{\mathbf{x}} (R_{\mathbf{x}}/\sigma^2_\epsilon - {\Gamma}_{\mathbf{x}})^{-1} {\Gamma}_{\mathbf{x}} + {\Gamma}_{\mathbf{x}}$, and that it follows $\textnormal{MC}_\tau > \sigma^2_\epsilon \textnormal{F}_\tau$ for all $\tau > 0$. Due to the complex properties of matrix $O$, \cite{Tino2013} does not establish further general results while providing explicit calculations of both MC and FMC when $A$ is symmetric or orthonormal. \cite{tino:symmetric} contains further derivations regarding the asymptotic Fischer memory capacity of particular classes of ESN models.

\medskip

The reasons why in this paper we focus on memory capacity \eqref{eq:MC_definition} instead of the Fischer memory curve are two-fold. First, FMC measures memory only in the state space, and the observation equation \eqref{eq:ESN_def_2} does not have any impact on the FMC computation. Our primary interest is to evaluate memory in terms of real-world applications, which inevitably requires studying the effect of the linear projection of states onto targets encoded by $\mathbf{W}$. Second, important theoretical contributions towards analyzing the impact of noise on the statistical properties of the states and the linear reservoir systems, in general, have already been made in \cite{couillet2016Proc, linearESN}. 
Finally, we emphasize that in Section~\ref{section:subspace}, we are able to show that MC is neutral to the choice of input mask, which is not the case for Fischer memory. As we explain in the following sections, this fact allows us to develop a particular numerical method that is insensitive to numerical artifacts and yields results fully coherent with theory.

\subsection{Linear Models Generically Have Maximal Memory}

Memory capacities of echo state networks with independent inputs have been originally analyzed in \cite{Jaeger:2002}. Already in this work, it was shown that 
\begin{equation*}
	1 \leq \text{MC} \leq N.
\end{equation*}
This statement was extended to more general recurrent neural networks 
in \cite{RC15}, where $N$ is in that case the state space dimension. Two results that have recently appeared in the literature show that linear echo state networks generically achieve \textit{maximal} memory capacity, that is, for almost all LESNs it holds that $\text{MC} = N$. Due to their importance in the sequel of the paper and for the sake of completeness, we collect some of those statements in the following result. The first one is contained in \cite{RC15}, Corollary 4.2, and we reproduce it in the next proposition with an illustrative proof that will be useful for some derivations later on.  

\begin{proposition}[{\bf LESN Memory Capacity}]
	\label{prop:MC}
	Consider a linear ESN model in \eqref{eq:ESN_def_1}-\eqref{eq:ESN_def_2} and let $\boldsymbol{\zeta} = \mathbf{0}$.
	Let $A$ be diagonalizable and such that  $\rho(A) < 1$, with $\rho(A)$ the spectral radius of the matrix $A$. Suppose that all the eigenvalues of $A$ are distinct. Let any of the following equivalent conditions hold
	\begin{description}
		\item[(i)] The vectors $\{{A} \mathbf{C}, {A}^2 \mathbf{C}, \ldots, {A}^N \mathbf{C} \}$ form a basis of $\mathbb{R}^N$.
		\item[(ii)] The Kalman controllability condition holds.
		\item[(iii)] ${A}$ has full rank and $\mathbf{C}$ is neither the zero vector nor an eigenvector of ${A}$.
	\end{description}
	If $({z}_t)_{t\in\mathbb{Z}_-}$ is a weakly stationary white noise process, then $\text{MC} = N$.
\end{proposition}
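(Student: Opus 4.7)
The plan is to establish the equivalence of (i)--(iii) via the spectral structure of $A$, use controllability to show that the state-covariance $\Gamma_{\mathbf{x}}$ is invertible, and then evaluate the memory sum by a short trace manipulation enabled by the white-noise assumption.

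For the equivalence of the controllability conditions, I would diagonalize $A=PDP^{-1}$ with $D=\text{diag}(\lambda_1,\ldots,\lambda_N)$ of distinct eigenvalues and set $\tilde{\mathbf{C}}=P^{-1}\mathbf{C}$. Since $A^k\mathbf{C}=PD^k\tilde{\mathbf{C}}$, the Kalman controllability matrix factors as $P\,\text{diag}(\tilde{\mathbf{C}})\,V$, where $V$ is the Vandermonde matrix built from the $\lambda_i$; as distinct eigenvalues make $V$ invertible, (ii) becomes the condition that $\tilde{\mathbf{C}}$ has no zero entry, and the same factorization (multiplied on the left by $A$) gives (i). When $A$ has full rank (i.e.\ no zero eigenvalues), (iii)'s condition that $\mathbf{C}$ is not an eigenvector is the geometric rendering of this same statement.

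For the main claim, the hypothesis $\rho(A)<1$ gives the explicit state representation \eqref{solution state LESN}, and for a weakly stationary white noise $(z_t)$ of variance $\sigma^2$ a direct computation yields
\begin{equation*}
\text{Cov}(z_{t-\tau},\mathbf{x}_t)=\sigma^2\,\mathbf{C}^\top (A^\top)^\tau,
\qquad
\Gamma_{\mathbf{x}}=\sigma^2\sum_{j=0}^\infty A^j\mathbf{C}\mathbf{C}^\top (A^\top)^j.
\end{equation*}
I would then verify that $\Gamma_{\mathbf{x}}$ is positive definite: any $\mathbf{v}$ in its kernel must satisfy $\mathbf{v}^\top A^j\mathbf{C}=0$ for every $j\geq 0$, which by Cayley--Hamilton is equivalent to orthogonality against the columns of the controllability matrix, so (ii) forces $\mathbf{v}=\mathbf{0}$. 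The closed-form expression \eqref{eq:MC_tau_covvar} therefore applies and, after invoking the cyclic property of the trace, gives
\begin{equation*}
\text{MC}_\tau = \sigma^2\,\mathbf{C}^\top (A^\top)^\tau \Gamma_{\mathbf{x}}^{-1} A^\tau\mathbf{C} = \sigma^2\,\text{tr}\!\left(\Gamma_{\mathbf{x}}^{-1}\, A^\tau\mathbf{C}\mathbf{C}^\top (A^\top)^\tau\right).
\end{equation*}
Summing over $\tau\geq 0$ and exchanging the trace with the absolutely convergent series (justified by $\rho(A)<1$), the inner sum collapses to $\Gamma_{\mathbf{x}}/\sigma^2$, delivering $\text{MC}=\text{tr}(I_N)=N$.

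The main obstacle I anticipate is a clean treatment of the equivalence (iii)$\Leftrightarrow$(i), since ``$\mathbf{C}$ is not an eigenvector'' is strictly weaker than $\tilde{\mathbf{C}}$ having all nonzero coordinates once $N\geq 3$; a careful reading of the precise statement may require invoking a genericity argument or an auxiliary hypothesis drawn from \cite{RC15}. The remaining steps are routine: once $\Gamma_{\mathbf{x}}$ is nonsingular and absolute summability is in hand, the trace telescope is immediate and no delicate estimates are needed beyond the stability bounds coming from $\rho(A)<1$.
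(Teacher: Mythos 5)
Your proof of the main claim is correct and follows essentially the same route as the paper: compute $\Gamma_{\mathbf{x}}$ and $\mathrm{Cov}(\mathbf{x}_t,z_{t-\tau})$ from the series representation \eqref{solution state LESN}, substitute into \eqref{eq:MC_tau_covvar}, rewrite each scalar $\mathrm{MC}_\tau$ as a trace, and exchange the sum over $\tau$ with the trace so that the inner series collapses to $\Gamma_{\mathbf{x}}/\gamma(0)$ and $\mathrm{MC}=\mathrm{tr}(\mathbb{I}_N)=N$. Where you differ is that you make two steps self-contained which the paper delegates to citations: you prove positive definiteness of $\Gamma_{\mathbf{x}}$ directly (a kernel vector must annihilate every $A^j\mathbf{C}$, which by Cayley--Hamilton reduces to the columns of the controllability matrix), whereas the paper invokes Proposition 4.3 of \cite{RC15}; and you sketch the equivalence of (i)--(iii) via the factorization of the Krylov matrix as $P\,\mathrm{diag}(\tilde{\mathbf{C}})\,V$ with $V$ Vandermonde, which the paper does not prove at all. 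Both additions are sound and arguably improve the exposition. Your closing caveat is also well taken: as literally stated, ``$\mathbf{C}$ is not an eigenvector of $A$'' in (iii) is strictly weaker than the condition that $\tilde{\mathbf{C}}=P^{-1}\mathbf{C}$ has no zero coordinate once $N\geq 3$ (take $\mathbf{C}=\mathbf{v}_1+\mathbf{v}_2$ with $N=3$), so the advertised equivalence requires either a strengthening of (iii) or a reading of it as ``$\mathbf{C}$ does not lie in any proper $A$-invariant subspace''; this is an issue with the statement inherited from the cited reference rather than a gap in your argument, which correctly rests on (i)/(ii) only.
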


\begin{proof}
	Under the assumption $\rho(A) < 1$ and of the stationarity and the finite second-order moments of ${\bf z} $, the statement \eqref{limit for finite variance} guarantees that the expression
	\begin{equation*}
		\mathbf{x}_t = \sum_{j=0}^\infty A^j \mathbf{C}\, {z}_{t-j},
	\end{equation*}
	determines almost surely a unique second-order stationary process.
	If $\textnormal{Var}({z}_{t}) = \gamma(0)$, the (time-independent) second moment of the state process is
	\begin{align}
		\label{gamma_x}
		\Gamma_{\mathbf{x}}
		& = \gamma(0)\,\sum_{j=0}^\infty A^j \mathbf{C}\,  \mathbf{C}^\top (A^j) 	^\top
	\end{align}
	and the covariance of the state and the input process is
	\begin{align*}
		\textnormal{Cov}(\mathbf{x}_t, {z}_{t-\tau})
		& = A^\tau \mathbf{C}\, \gamma(0).
	\end{align*}
	Substituting these expressions into \eqref{eq:MC_tau_covvar}, we conclude that the $\tau$-lag memory capacity is
	\begin{equation}\label{eq:MC_tau_AC1}
		\textnormal{MC}_\tau = \mathbf{C}^\top (A^\top)^\tau \left[ \sum_{j=0}^\infty A^j \mathbf{C} \mathbf{C}^\top (A^j) 	^\top \right]^{-1} A^\tau \mathbf{C} , \enspace \tau \in \mathbb{N},
	\end{equation}
	where the inverse is well-defined whenever any of the conditions {\bf (i)}, {\bf (ii)}, or {\bf (iii)} is satisfied (see Proposition 4.3, \cite{RC15}).  
	Summing over all lags and using the fact that $\text{MC}_\tau$ is a scalar, for all $\tau\in \mathbb{N}$, yields
	\begin{align*}
		\textnormal{MC} 
		&= \sum_{\tau = 0}^\infty \mathbf{C}^\top (A^\top)^\tau \left[ \sum_{j=0}^\infty A^j \mathbf{C} \mathbf{C}^\top (A^j) 	^\top \right]^{-1} A^\tau \mathbf{C} \\
		&= \sum_{\tau = 0}^\infty \textnormal{tr}\left( \left[ \sum_{j=0}^\infty A^j \mathbf{C} \mathbf{C}^\top (A^\top)^j \right]^{-1} A^\tau \mathbf{C} \mathbf{C}^\top (A^\tau)^\top \right) \\
		& = \textnormal{tr}\left( \left[ \sum_{j=0}^\infty A^j \mathbf{C} \mathbf{C}^\top (A^j) 	^\top \right]^{-1} \sum_{\tau = 0}^\infty A^\tau \mathbf{C} \mathbf{C}^\top (A^\tau)^\top \right)  = \textnormal{tr}(\mathbb{I}_N)  = N,
	\end{align*}
	as required.
\end{proof}

The second important result was originally presented in \cite{RC21}\footnote{The authors of \cite{RC21} acknowledge that the proof of the proposition has been communicated to them by Friedrich Philipp.} and guarantees that the conditions {\bf (i)}-{\bf (iii)} in Proposition~\ref{prop:MC} are satisfied almost surely, whenever, as it is customary in reservoir computing, the connectivity matrix $A$ and the input mask $\mathbf{C}$ of the linear system \eqref{eq:ESN_def_1}-\eqref{eq:ESN_def_2} are randomly drawn from some regular probability distribution. We recall that a random variable $X : \Omega \rightarrow \mathbb{R} $ defined on a probability space $(\Omega, \mathcal{F}, \mathbb{P}) $ and with values on a Borel measurable space $\mathbb{R} $ is {\it regular} whenever $\mathbb{P} \left(X =a\right)=0 $ for all $a \in \mathbb{R} $. The result is stated in the following proposition.

\begin{proposition}[\cite{RC21}]
	\label{random_matrix_lemma}
	Let $N \in \mathbb{N}  $,  $A\in \mathbb{M}_{N}$, and $\mathbf{C}\in \mathbb{R}^N$  and assume that the entries of $A$ and $\mathbf{C}$ are drawn using independent regular real-valued distributions. Then the following statements hold:
	\begin{description}
		\item [(i)]  The vectors $\{\mathbf{C}, A\mathbf{C}, A^2 \mathbf{C}, \ldots , A^{N-1}\mathbf{C}\}$ are linearly independent almost surely.
		\item [(ii)]  Given $m$ distinct complex numbers $\lambda_1, \ldots, \lambda_m \in \mathbb{C}$, where $m \leq N$, the event that $1, \lambda_1 , ... , \lambda_m \notin \sigma(A)$ ($\sigma(A)$ is the spectrum of $A$) and that the vectors
		\begin{align*}
			(\mathbb{I} - \lambda_j A)^{-1}(\mathbb{I} - A)^{-1}(\mathbb{I} - A^N)\mathbf{C}, \quad \mbox{$j = 1, \dots , m$}
		\end{align*}
		are linearly independent holds almost surely.
	\end{description}
\end{proposition}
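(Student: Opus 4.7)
The plan is to reduce both parts to a single \textbf{polynomial non-vanishing principle}: if $p$ is a polynomial in $k$ real variables that does not vanish identically and $Y_1, \ldots, Y_k$ are independent regular real random variables, then $\mathbb{P}(p(Y_1, \ldots, Y_k) = 0) = 0$. I would prove this by induction on $k$; the base case follows from regularity since a nonzero univariate polynomial has finitely many roots. In the inductive step, consider the leading coefficient of $p$ viewed as a polynomial in $Y_k$, which is itself a nonzero polynomial in $Y_1, \ldots, Y_{k-1}$; by the inductive hypothesis it is nonzero almost surely, and on that event the resulting polynomial in $Y_k$ has finitely many roots, each hit with probability zero by regularity and independence, so Fubini concludes.

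For \textbf{(i)}, linear independence of $\{\mathbf{C}, A\mathbf{C}, \ldots, A^{N-1}\mathbf{C}\}$ is equivalent to $\det K \ne 0$ where $K := [\mathbf{C}\mid A\mathbf{C}\mid \cdots \mid A^{N-1}\mathbf{C}]$, whose entries (and hence determinant) are polynomials in the entries of $A$ and $\mathbf{C}$. It therefore suffices to exhibit one $(A_0, \mathbf{C}_0)$ with $\det K \ne 0$: taking $A_0$ to be the $N\times N$ downshift matrix and $\mathbf{C}_0 = e_1$ gives $A_0^k \mathbf{C}_0 = e_{k+1}$, so $K(A_0, \mathbf{C}_0) = \mathbb{I}_N$ and the principle concludes.

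For \textbf{(ii)} I would first handle the spectrum condition: each $\det(\lambda_j \mathbb{I} - A)$ and $\det(\mathbb{I} - A)$ is a polynomial in the entries of $A$ that is nonzero (evaluate at $A = 0$), so $\{\lambda_j \in \sigma(A)\}$ and $\{1 \in \sigma(A)\}$ are each null and a finite union bound handles the intersection. On the complementary full-probability event, the vectors $v_j := (\mathbb{I} - \lambda_j A)^{-1}(\mathbb{I}-A)^{-1}(\mathbb{I}-A^N)\mathbf{C}$ are well defined; using the polynomial identity $(\mathbb{I}-A)^{-1}(\mathbb{I}-A^N) = \sum_{k=0}^{N-1} A^k$ and clearing the remaining denominators, I would pass to $w_j := \det(\mathbb{I} - \lambda_j A)\, v_j = \mathrm{adj}(\mathbb{I}-\lambda_j A)\bigl(\sum_{k=0}^{N-1} A^k\bigr)\mathbf{C}$, a vector-valued polynomial in $(A, \mathbf{C})$. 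Since each $\det(\mathbb{I}-\lambda_j A)$ is almost surely nonzero, linear independence of $\{v_j\}$ reduces to non-vanishing of some $m\times m$ minor of $W := [w_1 \mid \cdots \mid w_m]$, and each such minor is a polynomial in the entries of $(A, \mathbf{C})$.

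To finish, I need a single $(A_0, \mathbf{C}_0)$ at which one such minor is nonzero. Taking $A_0 = \mathrm{diag}(\mu_1, \ldots, \mu_N)$ with distinct real $\mu_i$ avoiding $1$, avoiding $1/\lambda_j$ (for nonzero $\lambda_j$), and satisfying $\mu_i^N \ne 1$, and $\mathbf{C}_0$ with all nonzero entries, a direct coordinate computation gives $(v_j)_i = \alpha_i / (1 - \lambda_j \mu_i)$ with $\alpha_i := (1 - \mu_i^N)(\mathbf{C}_0)_i/(1-\mu_i) \ne 0$; the $m \times m$ submatrix on any rows $i_1 < \cdots < i_m$ factors as $\mathrm{diag}(\alpha_{i_k})$ times the Cauchy-type matrix $(1/(1-\lambda_j \mu_{i_k}))_{j,k}$, whose determinant is nonzero by the classical Cauchy formula. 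The main technical subtlety I anticipate is verifying the Cauchy-type non-vanishing uniformly in the data, in particular accommodating a possibly zero $\lambda_j$ and checking that the finitely many forbidden scalars $\{1, 1/\lambda_j\}$ leave enough room to pick $N$ distinct admissible $\mu_i$, which is a combinatorial rather than conceptual matter.
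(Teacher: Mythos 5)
The paper does not actually prove this proposition: it is quoted verbatim from \cite{RC21}, with a footnote attributing the proof to a communication of Friedrich Philipp, so there is no in-paper argument to compare against. Judged on its own, your proposal is correct and is the natural (and almost certainly the intended) argument: reduce everything to the non-vanishing of a fixed polynomial in the $N^2+N$ independent regular entries of $(A,\mathbf{C})$, prove the non-vanishing principle by induction with Fubini, and exhibit a single witness point for each polynomial. Part (i) with the downshift matrix and $\mathbf{C}_0=e_1$ is clean. Three small bookkeeping points in part (ii) deserve a line each in a written-up version. First, your parenthetical ``evaluate at $A=0$'' shows $\det(\lambda_j\mathbb{I}-A)\not\equiv 0$ only when $\lambda_j\neq 0$; for $\lambda_j=0$ evaluate at $A=\mathbb{I}$ instead, and note separately that $\det(\mathbb{I}-\lambda_j A)\not\equiv 0$ (value $1$ at $A=0$) is the condition actually needed for the resolvents to exist. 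Second, the minors of $W$ have complex coefficients since the $\lambda_j$ do, so you should remark that the real-variable non-vanishing principle applies to a complex-coefficient polynomial $p=p_1+ip_2$ via whichever of $p_1,p_2$ is not identically zero. Third, in the witness construction you must also exclude $\mu_i=0$ (otherwise the substitution $1-\lambda_j\mu_i=\mu_i(1/\mu_i-\lambda_j)$ that reduces the matrix to Cauchy form breaks down, and a $\mu_i=0$ row is constant in $j$); since only finitely many reals are forbidden in total, this costs nothing. With those repairs the argument is complete.
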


Proposition~\ref{prop:MC} (see also Corollary 4.2 in \citealt{RC15})  together with Proposition~\ref{random_matrix_lemma} give a definite answer to the question of whether some reservoir architectures have theoretically more memory capacity than others in the linear setting. In theory, as we just showed, all linear ESNs with the connectivity matrix $A$ and the input mask $\mathbf{C}$ drawn from regular distributions {\it achieve almost surely their upper memory bound} regardless of the underlying design under minimal algebraic conditions. This means that the discussions about optimizing the components of a LESN reservoir's state map to achieve maximal theoretical memory capacity are not justified. Regardless of the LESN architecture, in the setting of Proposition~\ref{prop:MC}, {\it the memory of any LESN is generically maximal}. 

However, empirical estimates in the literature of the memory capacity in applied memory tasks may differ from the theoretical value of $N$, which has motivated multiple studies with attempts to design LESNs that render ``maximized'' memory. In the following sections, we characterize the problems associated with the most common ways of numerical estimation of MCs and explain computational issues that yield misleading empirical results. We show that purely numerical pathologies in empirical MC evaluation emerge in a plethora of memory ``maximization'' techniques applied to LESN architectures. As a solution, we shall propose a simple numerical scheme to combat the numerical inconsistency of empirical estimates with the theoretical result in Proposition~\ref{prop:MC}.

\subsection{Monte Carlo Estimation of Memory Capacities}
\label{Monte Carlo estimation of memory capacities}
In this section, we address important issues that arise when estimating memory capacities using standard Monte Carlo simulation tools. The definitions and the results discussed in this section show that even in the simplified setting of the so-called regular linear systems, the simulation-based estimation of network capacities may be misleading. We use this section exclusively to motivate the necessity of designing other numerical methods for capacity estimation that do not suffer from the poor statistical properties of na\"ive approaches based on plug-in estimators. In the following paragraphs, we spell out the finite-sample properties of the natural sample estimator of (total) memory capacity and illustrate the limitations of the sample-based approach that may lead to incorrect memory estimates that are incompatible with the generic $N$-memory capacity of LESNs.

{The availability of the closed-form solution \eqref{eq:MC_tau_covvar} facilitates the computation of capacities. However, even for linear specifications, the ill-conditioning of the associated covariance matrices of states leads to technical difficulties.  Some of those problems can be handled by using equivalent state-space representations. Proposition~2.5 in \cite{RC15} proves that new representations obtained out of linear injective system morphisms leave capacities invariant and hence can be used to produce systems with more technically tractable properties.} 

\begin{proposition}[{\bf Standardization of state-space realizations,} \citealt{RC15}]
	\label{std rc io}
	~\\Consider a state-space system as in \eqref{eq:ESN_def_1}-\eqref{eq:ESN_def_2} and suppose that $\rho(\widetilde{A})<1$. Let ${\bf z}: \Omega\longrightarrow \mathbb{R}^{{\mathbb{Z}}_{-}}$ be a stationary mean-zero input process  and let $\widetilde{{\bf x}}: \Omega\longrightarrow (\mathbb{R}_N)^{{\mathbb{Z}}_{-}}$ be the associated stationary state process given by \eqref{limit for finite variance}.  Suppose that the covariance matrix  $\Gamma_{\widetilde{{\bf x}}} := {\rm Cov}(\widetilde{\mathbf{x}}_t, \widetilde{\mathbf{x}}_t)$ is non-singular.
	Then, the map $f: \mathbb{R}^N \longrightarrow \mathbb{R}^N$ given by $f(\widetilde{\mathbf{x}}):=\Gamma_{\widetilde{{\bf x}}}^{-1/2} \widetilde{\mathbf{x}} $ is a system isomorphism between the system \eqref{eq:ESN_def_1}-\eqref{eq:ESN_def_2} and the one with state map 
	\begin{equation}
		\label{isomorphic state eq}
		\widetilde{F}( \mathbf{x}, {z}): = A \mathbf{x} + \mathbf{C} { z}
	\end{equation}
	and readout
	\begin{equation}
		\label{isomorphic readout eq}
		\widetilde{h}( \mathbf{x}): = {\mathbf{W}}^\top \mathbf{x},
	\end{equation}
	with $A:=\Gamma _{{\bf x}}^{-1/2} \widetilde{A}\Gamma_{{\bf x}}^{1/2}$, $\mathbf{C}:=\Gamma _{{\bf x}}^{-1/2}\widetilde{\mathbf{C}}$, and ${\mathbf{W}}=\Gamma _{{\bf x}}^{-1/2} \widetilde{\mathbf{W}}$.
	Moreover, the state process ${{\bf x} } $ associated to the system $\widetilde{F} $  and the input ${\bf z} $ is covariance stationary and
	\begin{equation}
		\label{mean and covariance new states}
		{\rm E}[{\mathbf{x}}_t] = {\bf 0}, \quad \mbox{and} \quad {\rm Cov}( {\mathbf{x}}_t, {\mathbf{x}}_t) = \mathbb{I}_N.
	\end{equation}
\end{proposition}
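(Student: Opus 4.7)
The proof is essentially a direct change-of-variables computation, so the plan is to verify each of the three claims one at a time: that $f$ intertwines the two state-space systems (making it a system isomorphism), that the readout transforms correctly, and that the new state process has zero mean and identity covariance.

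First I would define $\mathbf{x}_t := f(\widetilde{\mathbf{x}}_t) = \Gamma_{\widetilde{\mathbf{x}}}^{-1/2} \widetilde{\mathbf{x}}_t$, noting that $f$ is a linear bijection of $\mathbb{R}^N$ because the nonsingularity of $\Gamma_{\widetilde{\mathbf{x}}}$ together with its positive-semidefiniteness (as a covariance matrix) makes $\Gamma_{\widetilde{\mathbf{x}}}^{1/2}$ well-defined, symmetric, and invertible. Then I would left-multiply the original state equation $\widetilde{\mathbf{x}}_t = \widetilde{A}\widetilde{\mathbf{x}}_{t-1} + \widetilde{\mathbf{C}} z_t$ by $\Gamma_{\widetilde{\mathbf{x}}}^{-1/2}$ and insert the identity $\Gamma_{\widetilde{\mathbf{x}}}^{1/2}\Gamma_{\widetilde{\mathbf{x}}}^{-1/2}$ between $\widetilde{A}$ and $\widetilde{\mathbf{x}}_{t-1}$, yielding
\begin{equation*}
\mathbf{x}_t = \bigl(\Gamma_{\widetilde{\mathbf{x}}}^{-1/2}\widetilde{A}\,\Gamma_{\widetilde{\mathbf{x}}}^{1/2}\bigr)\mathbf{x}_{t-1} + \bigl(\Gamma_{\widetilde{\mathbf{x}}}^{-1/2}\widetilde{\mathbf{C}}\bigr)z_t = A\mathbf{x}_{t-1} + \mathbf{C} z_t,
\end{equation*}
which is exactly the state equation \eqref{isomorphic state eq}. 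The same substitution $\widetilde{\mathbf{x}}_t = \Gamma_{\widetilde{\mathbf{x}}}^{1/2}\mathbf{x}_t$ in the readout $\mathbf{y}_t = \widetilde{\mathbf{W}}^\top \widetilde{\mathbf{x}}_t$ yields \eqref{isomorphic readout eq} once $\mathbf{W}$ is identified via the stated formula. Since $\rho(\widetilde{A}) = \rho(A)$ (the two matrices are similar) and the hypothesis $\rho(\widetilde{A}) < 1$ holds, the new system satisfies the ESP, and $f$ conjugates the two reservoir maps, so it is a system isomorphism in the sense of Proposition 2.5 of \cite{RC15}.

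For \eqref{mean and covariance new states}, I would use the explicit series representation \eqref{solution state LESN} for $\widetilde{\mathbf{x}}_t = \sum_{j=0}^{\infty}\widetilde{A}^j\widetilde{\mathbf{C}}\,z_{t-j}$, valid under $\rho(\widetilde{A})<1$ and finite second moments of $\mathbf{z}$, together with linearity of expectation and the mean-zero assumption on $\mathbf{z}$ to conclude $\mathbb{E}[\widetilde{\mathbf{x}}_t] = \mathbf{0}$ and hence $\mathbb{E}[\mathbf{x}_t] = \Gamma_{\widetilde{\mathbf{x}}}^{-1/2}\mathbb{E}[\widetilde{\mathbf{x}}_t] = \mathbf{0}$. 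For the covariance, symmetry of $\Gamma_{\widetilde{\mathbf{x}}}^{-1/2}$ gives
\begin{equation*}
\operatorname{Cov}(\mathbf{x}_t,\mathbf{x}_t) = \Gamma_{\widetilde{\mathbf{x}}}^{-1/2}\,\Gamma_{\widetilde{\mathbf{x}}}\,\Gamma_{\widetilde{\mathbf{x}}}^{-1/2} = \mathbb{I}_N,
\end{equation*}
as required. Covariance stationarity of $\mathbf{x}$ is inherited from that of $\widetilde{\mathbf{x}}$ because $f$ is a deterministic linear bijection, so all finite-dimensional second moments of $(\mathbf{x}_t)_{t\in\mathbb{Z}_-}$ are time-invariant functions of the corresponding moments of $(\widetilde{\mathbf{x}}_t)_{t\in\mathbb{Z}_-}$.

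The proof is essentially bookkeeping and I do not anticipate a substantive obstacle; the only points requiring care are (a) ensuring that $\Gamma_{\widetilde{\mathbf{x}}}^{1/2}$ is taken as the symmetric positive-definite square root so that the similarity $A = \Gamma_{\widetilde{\mathbf{x}}}^{-1/2}\widetilde{A}\,\Gamma_{\widetilde{\mathbf{x}}}^{1/2}$ is genuine, and (b) matching the convention for $\mathbf{W}$ in the readout so that $\widetilde{\mathbf{W}}^\top\widetilde{\mathbf{x}}_t = \mathbf{W}^\top\mathbf{x}_t$ holds identically — this is a one-line check from the substitution $\widetilde{\mathbf{x}}_t = \Gamma_{\widetilde{\mathbf{x}}}^{1/2}\mathbf{x}_t$.
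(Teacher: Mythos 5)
Your proof is correct and is the natural change-of-variables argument; the paper itself gives no proof of this proposition (it is quoted verbatim from \cite{RC15}), so there is nothing to compare against and your verification fills that gap appropriately. One remark worth making explicit: the substitution $\widetilde{\mathbf{x}}_t=\Gamma_{\widetilde{\mathbf{x}}}^{1/2}\mathbf{x}_t$ gives $\widetilde{\mathbf{W}}^\top\widetilde{\mathbf{x}}_t=\bigl(\Gamma_{\widetilde{\mathbf{x}}}^{1/2}\widetilde{\mathbf{W}}\bigr)^\top\mathbf{x}_t$, so the output-preserving readout is $\mathbf{W}=\Gamma_{\widetilde{\mathbf{x}}}^{1/2}\widetilde{\mathbf{W}}$ rather than the $\Gamma_{\widetilde{\mathbf{x}}}^{-1/2}\widetilde{\mathbf{W}}$ printed in the statement (an apparent typo, immaterial for the memory capacity since the minimization in \eqref{eq:MC_definition} ranges over all $\mathbf{W}$); you correctly identify this as the one point needing care.
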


This result of invariance of memory capacities with respect to the system isomorphism $f(\mathbf{x}):=\Gamma_{{\bf x}}^{-1/2} \mathbf{x} $ allows us to work directly with the standardized state-space systems and assume that $\Gamma_{\mathbf{x}} = \mathbb{I}_N$ without loss of generality. 

\begin{definition}
	\label{reg_lin_def}
	Let  ${\bf z}: \Omega\longrightarrow \mathbb{R}^{{\mathbb{Z}}_{-}}$, $D \subset \mathbb{R}$, be a variance-stationary input process and let the state map $\widetilde{F}:\mathbb{R}^N\times \mathbb{R}\rightarrow \mathbb{R}^N$ be given by $\widetilde{F}(\mathbf{x},z):=A\mathbf{x}+\mathbf{C}z$ with $\rho(A)<1$. We call a system with the state map $\widetilde{F}$  a {\bfi  regular linear system} whenever the covariance matrix $\Gamma_{\mathbf{x}}$ of the associated covariance-stationary state process ${\bf x}: \Omega\longrightarrow (\mathbb{R}^N)^{{\mathbb{Z}}_{-}}$ satisfies $\Gamma_{\mathbf{x}} = \mathbb{I}_N$.
\end{definition}

A straightforward approach to estimate the memory capacity of an echo state network is to simulate the mean zero and variance one process $({z}_t)_{t=1}^T$, to compute the associated states $(\mathbf{x}_t)_{t=1}^T$
and to use in \eqref{eq:MC_tau_covvar} the  plug-in sample estimator
\begin{align}
	\widehat{\boldsymbol{\gamma}_{\mathbf{x}z}} (\tau):=\widehat{\textnormal{Cov}}(\mathbf{x}_t, {z}_{t-\tau})
	& = \frac{1}{T - \tau} \sum_{t=\tau+1}^T \mathbf{x}_t\, {z}_{t-\tau}\label{gamma_xz}.
\end{align}
This leads to the {sample memory capacity estimator}
\begin{equation}\label{eq:MC_tau_sample_estimator}
	\widehat{\textnormal{MC}}_\tau := 
	{\widehat{\boldsymbol{\gamma}_{\mathbf{x}z}}  (\tau)^\top  \widehat{\boldsymbol{\gamma}_{\mathbf{x}z}} (\tau)}
	=\big\lVert \widehat{\boldsymbol{\gamma}_{\mathbf{x}z}} (\tau) \big\rVert_2^2,
\end{equation}
that we refer to as the Monte Carlo estimator. Letting $N$ be fixed, under suitable assumptions of stationarity and sufficiently many finite moments, it is well-known that the above sample estimators are consistent and asymptotically normal (see \citealt{BrocDavisYellowBook, MR1278033} and \citealt{luetkepohl:book} for the details). These assumptions hold trivially when ${z}_t$ is sampled as i.i.d. standard Gaussian noise and we show further that $\widehat{\textnormal{MC}}_\tau \overset{p}{\to} \textnormal{MC}_\tau$ as $T \to \infty$ for any fixed $\tau$. 

However, if $N$ is growing with $T$, $\widehat{\textnormal{Cov}}(\mathbf{x}_t, {z}_{t-\tau})$ may be inconsistent. In practical implementations of echo state network architectures $N$ can be large, of the order of $10^4$ or more. Hence, $T$ must also be appropriately chosen for the Monte Carlo approximations to be valid. These considerations imply the necessity to study memory estimators in the high-dimensional time series setting (see e.g. \citealt{chen2013covariance} or \citealt{zhang2017gaussian} for examples of such discussions). We show in the following paragraphs that inaccuracies when numerically evaluating $\widehat{\textnormal{MC}}_\tau$ even when the ratio $T / N$ is small (in practice $T / N < 10$ can already be problematic) mean that the estimator \eqref{eq:MC_tau_sample_estimator} is a poor approximation of the LESN memory capacity. This issue can be even more significant when one wishes to quantify $\textnormal{MC}_\tau$ for $\tau$ large. We now provide a quick analysis of these phenomena using standard statistical arguments.

\begin{proposition}
	\label{memory_capacity_estimator} 
	Let $N\in \mathbb{N}$, $A\in \mathbb{M}_{N}$, $\mathbf{C}\in \mathbb{R}^N$, and $\boldsymbol{\zeta} = \mathbf{0}$, and suppose that the resulting linear system is regular. 
	Let $({z}_t)_{t=1}^T$, $T\in \mathbb{N}$, be mean-zero i.i.d. Gaussian with ${\text{Var}}({z}_t) = \gamma(0)=1$ and let $(\mathbf{x}_t)_{t=1}^T$ be the associated states (obtained using a trivial initialization). 
	Then the  memory capacity sample estimator for any $\tau\in \mathbb{N}$, $\tau<T$, is given by
	\begin{equation}
		\label{MCtauest}
		\widehat{\textnormal{MC}}_\tau(T) = \big\lVert \widehat{\boldsymbol{\gamma}_{\mathbf{x}z}} (\tau) \big\rVert_2^2
	\end{equation}
	with $\widehat{\boldsymbol{\gamma}_{\mathbf{x}z}} (\tau)$ as in \eqref{gamma_xz}
	and the total memory capacity  estimator on $\tau_{\rm{max}}\in \mathbb{N}$ sample of memory capacities is
	\begin{equation}
		\label{MCest}
		\widehat{\textnormal{MC}}(T) = \frac{1}{\tau_{\rm{max}}}\sum_{\tau=0}^{\tau_{\rm{max}}-1}\widehat{\textnormal{MC}}_\tau(T).
	\end{equation}
	These estimators have the following properties:
	\begin{description}
		\item [(i)]  $\widehat{\textnormal{MC}}_\tau(T)$ is a biased estimator of ${\textnormal{MC}}_\tau$ with bias $B_{MC} $ given by
		\begin{align}
			\label{bias}
			B_{MC}:=\E[\widehat{\textnormal{MC}}_\tau(T)] - {\textnormal{MC}}_\tau = \frac{N}{T - \tau} + \frac{2}{T - \tau} \sum_{j=0}^{\tau} {\boldsymbol{\gamma}_{\mathbf{x}z}} (j) ^\top {\boldsymbol{\gamma}_{\mathbf{x}z}} (2\tau-j),
		\end{align}
		which is positive for large $\tau$.
		\item [(ii)]  $\widehat{\textnormal{MC}}_\tau(T)$ is an asymptotically unbiased estimator, that is $(\E[\widehat{\textnormal{MC}}_\tau(T)] - {\textnormal{MC}}_\tau) \rightarrow 0$ as $T\rightarrow \infty$, and a weakly consistent estimator of ${\textnormal{MC}}_\tau$, that is $\widehat{\textnormal{MC}}_\tau(T) \overset{p}{\to} \textnormal{MC}_\tau$ as $T \to \infty$. 
		\item [(iii)]  $\widehat{\textnormal{MC}}(T)$ is a biased and asymptotically unbiased estimator of ${\textnormal{MC}}$. Moreover, it is weakly consistent, that is $\widehat{\textnormal{MC}}(T) \overset{p}{\to} \textnormal{MC}$ with $\tau_{\rm{max}} =O(T)$ and $T \to \infty$.
	\end{description}
\end{proposition}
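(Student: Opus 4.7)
My plan is to prove all three parts from a single core computation: expand $\widehat{\boldsymbol{\gamma}_{\mathbf{x}z}}(\tau)$ via the MA($\infty$) representation $\mathbf{x}_t = \sum_{j\ge 0} A^j \mathbf{C}\,z_{t-j}$ of \eqref{solution state LESN}, and exploit the Gaussianity of $({z}_t)$ through Isserlis' theorem on fourth moments. I will use throughout that $\mathbb{E}[\widehat{\boldsymbol{\gamma}_{\mathbf{x}z}}(\tau)] = \boldsymbol{\gamma}_{\mathbf{x}z}(\tau) = A^\tau\mathbf{C}$ by whiteness of the inputs and that, in a regular system, $\Gamma_{\mathbf{x}} = \mathbb{I}_N$ and $\gamma(0)=1$, so $\mathrm{MC}_\tau = \|\boldsymbol{\gamma}_{\mathbf{x}z}(\tau)\|_2^2$ via \eqref{eq:MC_tau_covvar}.

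For part (i), the starting point is the bias–variance decomposition
\begin{equation*}
\mathbb{E}\bigl[\widehat{\mathrm{MC}}_\tau(T)\bigr] \;=\; \bigl\|\mathbb{E}[\widehat{\boldsymbol{\gamma}_{\mathbf{x}z}}(\tau)]\bigr\|_2^2 \;+\; \mathrm{tr}\bigl(\mathrm{Var}(\widehat{\boldsymbol{\gamma}_{\mathbf{x}z}}(\tau))\bigr) \;=\; \mathrm{MC}_\tau + \mathrm{tr}\bigl(\mathrm{Var}(\widehat{\boldsymbol{\gamma}_{\mathbf{x}z}}(\tau))\bigr),
\end{equation*}
so $B_{MC}$ is exactly the trace of the covariance matrix of $\widehat{\boldsymbol{\gamma}_{\mathbf{x}z}}(\tau)$. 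Substituting the MA($\infty$) expansion turns $\mathrm{Cov}(\mathbf{x}_t z_{t-\tau},\mathbf{x}_s z_{s-\tau})$ into a double series in $A^j\mathbf{C}\mathbf{C}^\top(A^k)^\top$ weighted by $\mathbb{E}[z_{t-j}z_{s-k}z_{t-\tau}z_{s-\tau}]$. Isserlis' theorem splits that fourth Gaussian moment into three Wick pairings with transparent roles: $\{(t{-}j,s{-}k),(t{-}\tau,s{-}\tau)\}$ survives only on the diagonal $t=s$ and reconstructs $\Gamma_{\mathbf{x}}=\mathbb{I}_N$; $\{(t{-}j,t{-}\tau),(s{-}k,s{-}\tau)\}$ yields $A^\tau\mathbf{C}\mathbf{C}^\top(A^\tau)^\top$ and is exactly cancelled by the subtracted outer product of means; and the crossed pairing $\{(t{-}j,s{-}\tau),(s{-}k,t{-}\tau)\}$ contributes $A^{\tau+h}\mathbf{C}\mathbf{C}^\top(A^{\tau-h})^\top$ whenever $|h|\le\tau$, with $h:=t-s$. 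Summing over $(t,s)\in\{\tau+1,\dots,T\}^2$, exploiting the $h\leftrightarrow-h$ symmetry, using the scalar identity $\mathrm{tr}(A^{\tau+h}\mathbf{C}\mathbf{C}^\top(A^{\tau-h})^\top)=\boldsymbol{\gamma}_{\mathbf{x}z}(\tau-h)^\top\boldsymbol{\gamma}_{\mathbf{x}z}(\tau+h)$, and re-indexing by $j=\tau-h$ yields \eqref{bias}. Positivity for large $\tau$ follows because $\rho(A)<1$ forces $\|A^\ell\mathbf{C}\|$ to decay geometrically in $\ell$, so the inner-product sum becomes negligible compared to the strictly positive $N/(T-\tau)$ term.

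For part (ii), asymptotic unbiasedness is immediate since each summand in \eqref{bias} is $O(1/(T-\tau))$ at fixed $\tau$. The same Wick calculation gives $\mathrm{tr}(\mathrm{Var}(\widehat{\boldsymbol{\gamma}_{\mathbf{x}z}}(\tau)))=O(1/(T-\tau))$, hence $\widehat{\boldsymbol{\gamma}_{\mathbf{x}z}}(\tau)\to\boldsymbol{\gamma}_{\mathbf{x}z}(\tau)$ in $L^2$ and therefore in probability; the continuous mapping theorem applied to $\mathbf{v}\mapsto\|\mathbf{v}\|_2^2$ transfers the convergence to $\widehat{\mathrm{MC}}_\tau(T)$. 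An equivalent, more conceptual route observes that $(\mathbf{x}_t z_{t-\tau})_t$ is a measurable factor of the i.i.d.\ sequence $(z_s)_s$ and is therefore ergodic stationary, and invokes the Birkhoff theorem.

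Part (iii) follows by linearity of expectation for the biasedness and asymptotic-unbiasedness claims, and from combining the per-lag $L^2$ bound of (ii) with the tail estimate for $\mathrm{MC}-\sum_{\tau<\tau_{\max}}\mathrm{MC}_\tau$, which vanishes as $\tau_{\max}\to\infty$ because $\mathrm{MC}_\tau$ decays geometrically (again using $\rho(A)<1$); with $\tau_{\max}=O(T)$, both the truncation tail and the accumulated fluctuation vanish in probability. The main obstacle I foresee is the bookkeeping in the Wick expansion of part (i), specifically matching the three Isserlis pairings with the constraint $|t-s|\le\tau$ and collapsing the double sum into the compact form of \eqref{bias}; a secondary subtlety in (iii) is that $\tau$ can grow with $T$, so one must keep the $O(1/(T-\tau))$ variance bound uniform in $\tau\in\{0,\dots,\tau_{\max}-1\}$, which the geometric decay of $\|A^\tau\mathbf{C}\|$ affords.
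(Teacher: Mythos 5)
Your proposal follows essentially the same route as the paper: both expand $\widehat{\boldsymbol{\gamma}_{\mathbf{x}z}}(\tau)$ through the MA($\infty$) representation and evaluate the Gaussian fourth moments (your three Isserlis pairings are exactly the paper's case-by-case enumeration of index coincidences), arriving at the same three contributions --- the $\gamma(0)\,\mathrm{tr}(\Gamma_{\mathbf{x}})/(T-\tau)=N/(T-\tau)$ term, the mean term cancelled in the variance, and the crossed-lag sum $\sum_j \boldsymbol{\gamma}_{\mathbf{x}z}(j)^\top\boldsymbol{\gamma}_{\mathbf{x}z}(2\tau-j)$. Your treatment of (ii) via $L^2$ convergence of the vector estimator plus the continuous mapping theorem is in fact slightly tighter than the paper's one-line Markov-inequality remark, while the remaining level of rigor (the glossed-over $(T-\tau-|h|)$ edge counts in (i), and the unexamined accumulation of the $N/(T-\tau)$ bias terms over $\tau_{\max}=O(T)$ lags in (iii)) matches the paper's own proof.
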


\begin{proof} 
	
	Both \eqref{MCtauest} and \eqref{MCest} are immediate consequences of the definition \eqref{eq:MC_tau_sample_estimator}. To show {\bf (i)}, we use that by Definition \ref{reg_lin_def}, $\rho(A)<1$  and obtain that 
	\begin{align*}
		\E \big[  \big\lVert \widehat{\boldsymbol{\gamma}_{\mathbf{x}z}(\tau)} \big\rVert_2^2 \big]
		=  &\E\left[ \left( \frac{1}{T - \tau} \sum_{t=\tau+1}^T \mathbf{x}_t\, {z}_{t-\tau} \right)^\top \left( \frac{1}{T - \tau} \sum_{s=\tau+1}^T \mathbf{x}_s\, {z}_{s-\tau} \right) \right] \\
		= &\frac{1}{(T - \tau)^2}\E\left[ \left( \sum_{t=\tau+1}^T \sum_{j=0}^{\infty}A^j \mathbf{C} \, {z}_{t-j} \, {z}_{t-\tau} \right)^\top \left( \sum_{s=\tau+1}^T \sum_{k=0}^{\infty}A^k \mathbf{C} \, {z}_{t-k} \, {z}_{s-\tau} \right) \right] \\
		= &\frac{1}{(T - \tau)^2} \, \E\left[ \mathbf{C}^\top \left\{\sum_{t=\tau+1}^T \sum_{s=\tau+1}^{T} \sum_{j=0}^{\infty} \sum_{k=0}^{\infty} (A^j)^\top A^k  {z}_{t-\tau} \, z_{t-j}  {z}_{s-\tau} z_{s-k}   \right\}\mathbf{C} \right] \\
		= &\frac{1}{(T - \tau)^2} \Bigg\{\, \sum_{t=\tau+1}^T \mathbf{C}^\top (A^\tau)^\top A^\tau \mathbf{C}\, \E\left[  {z}_{t-\tau}^4    \right]+ \sum_{t\neq s} \mathbf{C}^\top (A^\tau)^\top A^\tau \mathbf{C}\, \E\left[  {z}_{t-\tau}^2 \, z_{s-\tau}^2    \right]\\
		&+\sum_{t=\tau+1}^T \sum_{j=0, j\neq \tau}^{\infty} \mathbf{C}^\top (A^j)^\top A^j \mathbf{C}\, \E\left[  {z}_{t-\tau}^2 \, z_{t-j}^2    \right] \\
		&+\sum_{t=\tau+1}^T \sum_{j=0, j\neq \tau}^{2\tau} \mathbf{C}^\top (A^j)^\top A^{2\tau-j} \mathbf{C}\, \E\left[  {z}_{t-\tau}^2 \, z_{t-j}^2    \right]
		\Bigg\}\\
		=& \frac{1}{T - \tau}\big\{3\|\boldsymbol{\gamma}_{\mathbf{x}z} (\tau)\|_2^2 + (T - \tau - 1)\|\boldsymbol{\gamma}_{\mathbf{x}z} (\tau)\|_2^2 +  \gamma(0)\textnormal{tr}(\Gamma_{\mathbf{x}})-\|\boldsymbol{\gamma}_{\mathbf{x}z} (\tau)\|_2^2\\
		&+\sum_{j=0}^{2\tau} \mathbf{C}^\top (A^j)^\top A^{2\tau-j} \mathbf{C} \gamma(0)^2-\|\boldsymbol{\gamma}_{\mathbf{x}z} (\tau)\|_2^2\big\}\\
		=&\|\boldsymbol{\gamma}_{\mathbf{x}z} (\tau)\|_2^2 + \frac{1}{(T - \tau)}  \gamma(0)\textnormal{tr}(\Gamma_{\mathbf{x}})+ \frac{1}{(T - \tau)} \sum_{j=0}^{2\tau} \boldsymbol{\gamma}_{\mathbf{x}z} (j)^\top \boldsymbol{\gamma}_{\mathbf{x}z} (2\tau-j),
	\end{align*}
	where we can use that $\Gamma_{\mathbf{x}} = \mathbb{I}_N$ and that $\gamma(0)=1$, which yields \eqref{bias}.
	
	Further, using that for any $\epsilon>0$ there exists a matrix norm $\vertiii{\cdot}$ such that $\vertiii{A}=\rho(A)+\epsilon$ (see Lemma 7.6.12 in \citealt{horn:matrix:analysis}), the second term in $B_{MC}$ can be bounded as follows
	\begin{align*}
		\frac{1}{T - \tau} \sum_{j=0}^{2\tau} \vert \boldsymbol{\gamma}_{\mathbf{x}z} (j)^\top \boldsymbol{\gamma}_{\mathbf{x}z} (2\tau-j) \vert&=  \frac{\gamma(0)^2}{T - \tau}  \sum_{j=0}^{2\tau} \vert\mathbf{C}^\top (A^j)^\top A^{2\tau-j} {\mathbf{C}} \vert \\&\leq \frac{\gamma(0)^2}{T - \tau} \sum_{j=0}^{2\tau} \vertiii{\mathbf{C}}^2 \vertiii{(A^\top)^j} \vertiii{A^{2\tau-j}} \\
		&\leq \frac{\gamma(0)^2}{T - \tau}  \vertiii{\mathbf{C}}^2\sum_{j=0}^{2\tau}  \vertiii{A^\top}^j \vertiii{A} ^{2\tau-j}\\
		&=\frac{\tau+1}{T - \tau} \gamma(0)^2 \vertiii{\mathbf{C}}^2    (\rho(A) + \epsilon) ^{2\tau},
	\end{align*}
	and hence decays exponentially fast with $\tau$. It is also easy to see that  $B_{MC}$ is always positive for large enough $\tau$.
	
	In order to show {\bf{(ii)}}, notice that  {\bf{(i)}} together with the Markov inequality gives $\widehat{\text{MC}}_\tau(T) = O_p(T^{-1})$ which yileds asymptotic unbiasedness as $T\rightarrow \infty$ and weak consistency of $\widehat{\textnormal{MC}}_\tau(T)$ as an estimator of ${\textnormal{MC}}_\tau$.
	Finally, in {\bf{(iii)}} one can mimic the proof of {\bf{(ii)}} and use that, 
	by Gelfand's formula \citep{lax:functional:analysis}, $\lim\limits_{k \rightarrow \infty}\vertiii{A ^k}^{1/k}=\rho(A)<1$, which implies the existence of a number $k _0 \in \mathbb{N}$ such that $\vertiii{A ^k}<1 $, for all $k\geq k _0 $. Consequently, this implies the finiteness of all the sums in $\widehat{\text{MC}}_\tau(T)$ with $\tau_{\rm{max}} =O(T)$ and $T \to \infty$. 
\end{proof}

This result shows that even though the estimator of the memory capacity is asymptotically unbiased, in finite samples $\textnormal{MC}_\tau$ is always positively biased above zero. This means that, even with large $T$ summing up $\tau_{\rm max}$ terms in the sequence $\{\text{MC}_\tau\}^{\infty}_{\tau=0}$ may yield a memory capacity estimate that is above the theoretical limit given by Proposition~\ref{prop:MC}. This happens even when reservoir matrices are well-conditioned. Figure~\ref{fig:sample_memory_inflation} illustrates the case when $N = 100$, $A$ is a scaled random orthogonal matrix, and $\mathbf{C}$ is a $2$-norm-scaled random normal vector. Taking $\tau_{\rm max} = 500$ to estimate MC, we show that even in those Monte Carlo simulations where $T/N \approx 100$ non-negligible memory overestimation errors are committed.
%

\begin{figure}[t]
	\centering
	\includegraphics[width=0.95\textwidth]{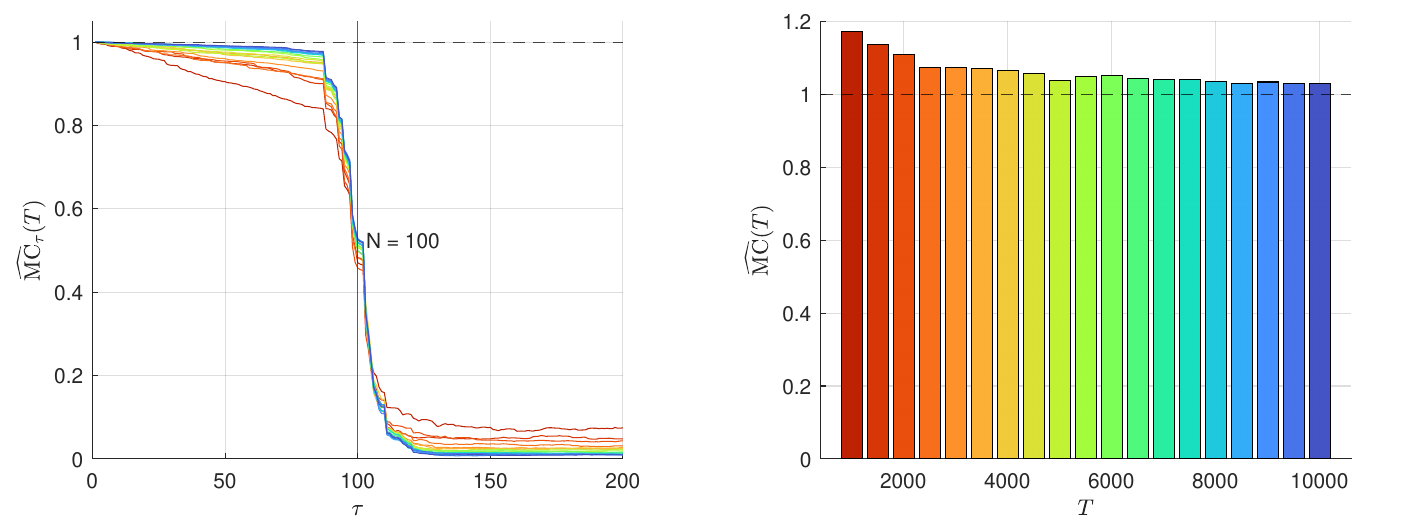}
	\caption{Illustration of memory capacity inflation due to the inconsistent estimation of $\textnormal{MC}_\tau$ for  LESN with $N = 100$, orthogonal $A$ with $\rho(A) = 0.9$, and input mask $\mathbf{C}= \overline{\mathbf{C}} / \norm{\overline{\mathbf{C}}}$ with $\mathbf{C} = (\overline{c}_{i})_{i=1}^N \sim\ \text{i.i.d.}\ \mathcal{N}(0,1)$: (a) memory curves $\widehat{\textnormal{MC}}_\tau(T)$; (b) bar chart of normalized total memory capacity $\widehat{\textnormal{MC}}(T) / N$. 
		Memory curves $\widehat{\textnormal{MC}}_\tau(T)$ are computed for $\tau \in \{0, 1, ..., 5N\}$ (in (a), $\widehat{\textnormal{MC}}_\tau(T)$  is plotted only up to $\tau=2N $ for the sake of clarity). Estimators are computed from simulated $(z_t)_{t=1}^T \sim \text{i.i.d.}\ \mathcal{N}(0,1)$,  with $T\in \{1000, 1500,\ldots, 10000\}$. 
	}
	\label{fig:sample_memory_inflation}
\end{figure}

\subsection{Na\"ive Algebraic Memory Estimation}
\label{Naive Algebraic Method}
In this section, we consider another possibility for the evaluation of the memory using a purely algebraic approach and without relying on Monte Carlo simulations. Again, we show that numerical issues are also encountered with this approach. We start by noticing that, under the hypotheses in Proposition~\ref{prop:MC}, the memory capacity can be computed using \eqref{eq:MC_tau_AC1}, namely, for any $\tau \in \mathbb{N}$
\begin{equation}
	\label{eq:MC_tau_AC}
	\textnormal{MC}_\tau =\mathbf{C}^\top (A^\tau)^\top {G}_{\mathbf{x}}^{-1} A^\tau \mathbf{C},
\end{equation}
where ${G}_{\mathbf{x}} := \gamma(0)^{-1} \Gamma_{\mathbf{x}} $ denotes the normalized version of the state autocovariance matrix in \eqref{gamma_x} and can be written as
\begin{equation}\label{eq:Gx_series}
	{G}_{\mathbf{x}} = \sum_{j=0}^\infty A^j \mathbf{C} \mathbf{C}^\top (A^j)^\top.
\end{equation}
The infinite series in the definition of ${G}_{\mathbf{x}}$ may be hard to approximate well with a finite number of terms if the spectral radius of ${A}$ is very close to one, a choice that is quite common in applications. This concern can be easily mitigated by noting that under the hypotheses of Proposition~\ref{prop:MC} a closed-form expression of ${G}_{\mathbf{x}}$ in terms of the eigendecomposition of ${A}$ can be derived (for details, we refer the reader to the proof of Proposition~4.3 in \cite{RC15}). 
Let $\{\mathbf{v}_1, \ldots, \mathbf{v}_N\}$ be an eigenbasis of ${A}$ and $\{\lambda_1, \ldots, \lambda_N\}$ be the associated eigenvalues. By expressing $\mathbf{C}$ as $\mathbf{C} = \sum_{i=1}^N {c}_i \, \mathbf{{v}}_i$ it is straightforward to show that
\begin{equation}\label{eq:Gx_eigenbasis}
	{G}_{\mathbf{x}} = \sum_{i,j = 1}^N \frac{{c}_i \bar{{c}_j}}{1 - \lambda_i \bar{\lambda}_j} \, \mathbf{v}_i \, \mathbf{v}_j^\ast.
\end{equation}
and hence ${G}_{\mathbf{x}}$ can be readily and precisely computed.  Unfortunately, ${G}_{\mathbf{x}}$ can still be significantly poorly conditioned for moderately large $N$ and commonly chosen distributions for the entries of $A$. This problem is easy to illustrate by plotting the norm of the eigenvalues of ${G}_{\mathbf{x}}$ for  $A$ sampled from laws that are standard in the literature. 

We provide an example demonstrating this phenomenon in Figure~\ref{fig:G_x_eigenvalues} using two reservoirs of size $N = 50$ and $N = 150$. More precisely, for five commonly used choices of connectivity matrices, we plot the absolute values of the eigenvalues of $G_{\mathbf{x}}$ in decreasing order. We compare them with the standard double-precision of floating point numbers {\it eps} in our software of choice, MATLAB. Notice that all the eigenvalues in absolute value smaller than {\it eps} will be numerically treated as zero by linear algebra routines. This poor conditioning does not by itself mean that software packages will fail to solve the linear system given by ${G}_{\mathbf{x}} \, \mathbf{u} = A^\tau \mathbf{C}$; rather, the numerical solution for $\mathbf{u}$ will be inaccurate \cite[Section 5.8]{horn:matrix:analysis}. 
\begin{figure}
	\centering
	\begin{subfigure}[b]{0.49\textwidth}
		\centering
		\includegraphics[width=\textwidth]{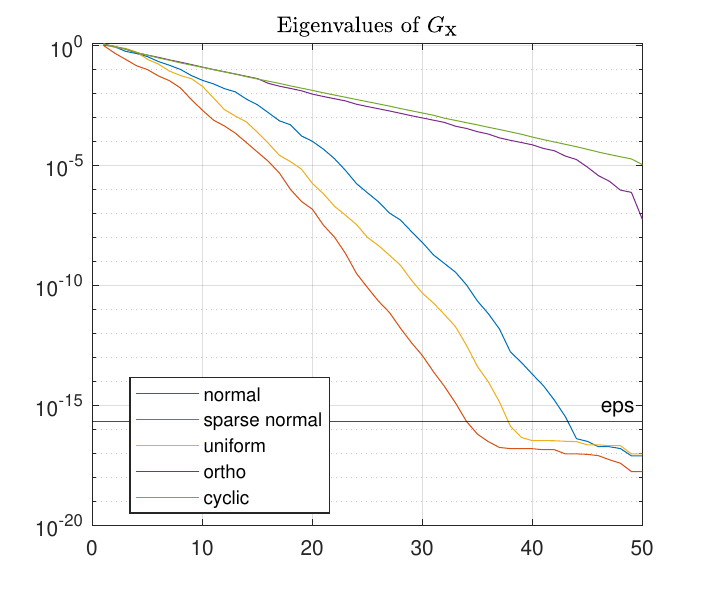}
		\caption{$N = 50$}
		\label{}
	\end{subfigure}
	\begin{subfigure}[b]{0.49\textwidth}
		\centering
		\includegraphics[width=\textwidth]{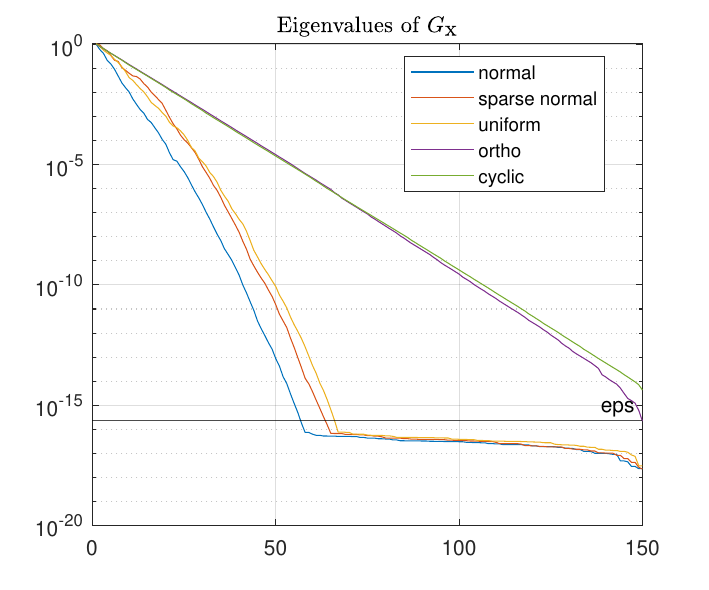}
		\caption{$N = 150$}
		\label{}
	\end{subfigure}
	\caption{Eigenvalue plot (in absolute values) for ${G}_{\mathbf{x}}$ for various types of connectivity matrices.  ${G}_{\mathbf{x}}$ was computed using $1000$ series terms in \eqref{eq:MC_tau_AC}, a connectivity matrix $A \in \mathbb{M}_N$ with spectral radius $\rho(A) = 0.9$ and a unit norm input mask $\mathbf{C} \in \mathbb{R}^N$. Computations are performed in MATLAB with the standard double-precision of floating point numbers $eps =2^{-52}\approx 2.2\times 10^{-16}$ marked with the black horizontal solid line.}
	\label{fig:G_x_eigenvalues}
\end{figure}
This numerical instability is at the origin of the seemingly suboptimal memory performance of LESNs observed in implementations. Further, note that even if $\Gamma_{\mathbf{x}}$ is estimated using a Monte Carlo simulation, due to its consistency as $T \to \infty$, the sample estimator $\widehat{\Gamma}_{\mathbf{x}}$ inherits the conditioning issues of its theoretical counterpart. This effectively implies that the simulation of ${G}_{\mathbf{x}}$ is not a feasible way to mitigate the conditioning problem, even asymptotically. Regularization methods, such as Tikhonov, also do not solve this as they modify the covariance eigenvalue structure.

The following example about the so-called \textit{cyclic reservoirs} is much studied in the literature under the name ``RingOfNeurons'' (see \citealt{ogESN2012}, for example, or more recently in \citealt{verzelliInputtoStateRepresentationLinear2021}). Cyclic architectures yield memory curves that are computable in closed form, and the ill-conditioning of ${G}_{\mathbf{x}}$ can be explicitly demonstrated. Cyclic ESNs fall into the more general category of orthogonal recurrent neural networks, for which \cite{White2004} has also derived some theoretical memory properties. However, there the authors consider the case in which the states may be contaminated by noise, a situation that we do not discuss in this work.

\begin{example}[{\bf Cyclic reservoirs}] \normalfont
	Consider a $N$-dimensional cyclic reservoir with the unscaled orthogonal connectivity matrix 
	\begin{equation*}
		\widetilde{A} = \left(\begin{array}{ccccc}
			0 & 0 &\ldots & 0 & 1 \\
			1 & 0 & \ddots & 0 & 0 \\
			0 & \ddots & \ddots & \vdots & \vdots \\
			\vdots  & \ddots & \ddots & 0 & 0 \\ 
			0  & \ldots & 0 & 1 & 0 
		\end{array}\right) \in \mathbb{M}_N,
	\end{equation*}
	which is rescaled with some $\rho_A < 1$ by setting $A = \rho_A \widetilde{A}$.

	In the literature, both $\widetilde{A}$ and $\widetilde{A}^\top$ are referred to as the cyclic reservoir matrices, and the state dynamics they define are identical up to a permutation of the reservoir nodes \citep{Rodan2011}. Let $\mathbf{C} = \mathbf{e}_1$ be the first canonical basis vector of $\mathbb{R}^N$. First, observe that
	\begin{equation*}
		A \mathbf{C} = \mathbf{e}_2,
		\quad 
		A^2 \mathbf{C} = \mathbf{e}_3,\,
		\ldots,
		\,
		A^{N-1} \mathbf{C} = \mathbf{e}_N,
	\end{equation*}
	which justifies the use of the term {\it cyclic}.\footnote{Further, in Proposition~\ref{prop:input_mask_neutral} we prove that memory capacities $\text{MC}_\tau$ are invariant with respect to the choice of $\mathbf{C}$. Hence, our selection of input mask does not imply any loss of generality.}
	Second, note that in this case we can  obtain the explicit expression of the normalized state covariance matrix as follows:
	\begin{align*}
		{G}_{\mathbf{x}} 
		& =
		\text{diag}\Bigg( \sum_{j=0}^\infty \rho_A^{i(2N)}, \sum_{j=0}^\infty \rho_A^{i(2N)+2}, \ldots , \sum_{j=0}^\infty \rho_A^{i(2N)+2(N-1)} \Bigg)  =\\
		&\text{diag}\Bigg( \frac{1}{1 - \rho_A^{2N}}, \frac{\rho_A^2}{1 - \rho_A^{2N}}, \ldots , \frac{\rho_A^{2(N-1)}}{1 - \rho_A^{2N}} \Bigg) 
	\end{align*}
	and hence
	\begin{equation*}
		{G}_{\mathbf{x}}^{-1} 
		= 
		\text{diag}\left( 1 - \rho_A^{2N},\ \frac{1 - \rho_A^{2N}}{\rho_A^2} ,\ \ldots\ ,\ \frac{1 - \rho_A^{2N}}{\rho_A^{2(N-1)}} \right) .
	\end{equation*}
	This formula shows that if $N$ is large, inversion of ${G}_{\mathbf{x}}$ can be an ill-conditioned problem depending on $\rho_A$. Finally, for $0 \leq \tau \leq N-1$ it holds
	\begin{equation*}
		\text{MC}_\tau 
		= \mathbf{e}_1^\top (A^\tau)^\top {G}_{\mathbf{x}}^{-1} A^\tau \mathbf{e}_1 
		= \rho_A^\tau \left( \frac{1 - \rho_A^{2N}}{\rho_A^{2\tau}} \right) \rho_A^\tau
		= 1 - \rho_A^{2N} ,
	\end{equation*}
	while in general for $k N \leq \tau \leq k (N+1) - 1$, $k > 1$, one has
	\begin{equation*}
		\text{MC}_\tau 
		= \rho_A^{kN + \tau} \left( \frac{1 - \rho_A^{2N}}{\rho_A^{2\tau}} \right) \rho_A^{kN + \tau}
		= \rho_A^{2 kN} (1 - \rho_A^{2N}) .
	\end{equation*}
	These computations are a special case of more general results in \cite{Rodan2011}, although we have made explicit the values of $\text{MC}_\tau$. \cite{Rodan2011} further proved that such memory capacities arise for generic $\mathbf{C}$ when $A$ is chosen to be a regular rotation based on the input mask.
\end{example}

\section{Robust Memory Computation}
\label{section:subspace}

In this section, we propose simple but effective methods to compute the memory capacity $\text{MC}_\tau$ for linear ESNs. These methods are not affected by the problems discussed in Section~\ref{section:naive}. First, we show a strong neutrality result of the memory capacity with respect to input masks. Second, we discuss the origin of numerical instabilities of memory computation and the so-called memory gaps, borrowing from the theory of Krylov subspaces. Finally, we propose new computational methods based on the Arnoldi iteration algorithm for the leading eigenvector computation and on the memory neutrality with respect to the input mask.  We call our proposed methods \textit{robust}, since they do not suffer explicitly from the conditioning issues that arise in the na\"ive algebraic and statistical methods presented in the previous section and render empirical results that are in agreement with the theory.

\subsection{Input Mask Memory Neutrality}

A fundamental aspect of memory capacity is its dependence on the structure of the connectivity matrix $A$ and the input mask $\mathbf{C}$. We recall that by Proposition~\ref{prop:MC}, we know that as long as $A$ and $\mathbf{C}$ satisfy a controllability condition, the total memory MC of a LESN is maximal. Moreover, by Proposition~\ref{random_matrix_lemma} this holds almost surely whenever both  $A$ and $\mathbf{C}$ are sampled from some regular distribution. We now prove a much stronger result: in the linear setup, under the same controllability conditions, the input mask $\mathbf{C}$ does not have any impact on individual $\tau$-lag memory capacities $\text{MC}_\tau$.

\begin{proposition}[{\bf Input mask neutrality}]
	\label{prop:input_mask_neutral}
	For any linear echo state network under the assumptions of Proposition~\ref{prop:MC}, the {memory capacity is input mask neutral}, that is, $\textnormal{MC}_\tau$ is invariant with respect to the choice of $\mathbf{C}$, for all $\tau\in \mathbb{N}$.
\end{proposition}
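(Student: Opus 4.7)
The plan is to start from the closed-form expression for $\textnormal{MC}_\tau$ derived inside the proof of Proposition \ref{prop:MC}, namely
\[
\textnormal{MC}_\tau = (A^\tau \mathbf{C})^\top G_{\mathbf{x}}^{-1} (A^\tau \mathbf{C}),
\]
and to show by a direct algebraic manipulation that, once $A$ is expressed in its eigenbasis, the whole $\mathbf{C}$-dependence is carried by a diagonal factor that telescopes out between the outer vectors and the core inverse. The target is to reduce the right-hand side to an expression depending only on the spectrum of $A$.

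Concretely, I would diagonalize $A = V \Lambda V^{-1}$, introduce the eigenbasis coordinates $\mathbf{c} = V^{-1} \mathbf{C}$ together with the diagonal matrix $D_{\mathbf{c}} = \textnormal{diag}(c_1, \ldots, c_N)$, and combine two factorizations. The first is $A^\tau \mathbf{C} = V D_{\mathbf{c}} \boldsymbol{\lambda}^\tau$, where $\boldsymbol{\lambda}^\tau := (\lambda_1^\tau, \ldots, \lambda_N^\tau)^\top$. The second rewrites \eqref{eq:Gx_eigenbasis} in the compact form
\[
G_{\mathbf{x}} = V D_{\mathbf{c}} P D_{\mathbf{c}}^\ast V^\ast, \qquad P_{ij} := \frac{1}{1 - \lambda_i \bar{\lambda}_j}.
\]
A substitution into the quadratic form then leaves, after the cancellations $V^\ast V^{-\ast} = \mathbb{I}$, $V^{-1} V = \mathbb{I}$, and $D_{\mathbf{c}}^{\pm 1} D_{\mathbf{c}}^{\mp 1} = \mathbb{I}$, the expression
\[
\textnormal{MC}_\tau = (\bar{\boldsymbol{\lambda}}^\tau)^\top P^{-1} \boldsymbol{\lambda}^\tau,
\]
which is manifestly independent of the input mask.

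The hard part is establishing the invertibility of $D_{\mathbf{c}}$, i.e., that every component of $\mathbf{c}$ is non-zero; without this, the four diagonal factors fail to cancel and the whole argument collapses. This is where the hypotheses of Proposition \ref{prop:MC} enter essentially. I would handle it with a short Krylov argument: by condition \textbf{(ii)} the Kalman controllability matrix $[\mathbf{C}, A\mathbf{C}, \ldots, A^{N-1}\mathbf{C}]$ is non-singular, and it factors as $V D_{\mathbf{c}} W$, where $W$ is the Vandermonde matrix in the (distinct) eigenvalues $\lambda_1, \ldots, \lambda_N$; non-singularity of the product then forces $D_{\mathbf{c}}$ to be non-singular. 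The remaining invertibilities are straightforward: $V$ because $A$ is diagonalizable with distinct eigenvalues, and $P$ because $G_{\mathbf{x}}$ is (a fact already used in \eqref{eq:MC_tau_AC}). The careful tracking of complex conjugates required when $A$ has complex eigenvalues is only a minor bookkeeping issue that does not affect the real-valued scalar $\textnormal{MC}_\tau$.
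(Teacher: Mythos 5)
Your proposal is correct and follows essentially the same route as the paper's proof: diagonalize $A$, pass to the eigenbasis coordinates $\mathbf{c}=V^{-1}\mathbf{C}$, factor $G_{\mathbf{x}}$ as $V\,\mathrm{diag}(\mathbf{c})\,L_A\,\mathrm{diag}(\mathbf{c})^\ast V^\ast$, and cancel the diagonal factors against the $\mathbf{c}$-dependence of $A^\tau\mathbf{C}$ to land on $\boldsymbol{\iota}_N^\top(\Lambda^\ast)^\tau L_A^{-1}\Lambda^\tau\boldsymbol{\iota}_N$. Your explicit justification that $\mathrm{diag}(\mathbf{c})$ is invertible — via the Krylov--Vandermonde factorization $K_N=V D_{\mathbf{c}}W_N$ and the controllability hypothesis — is a welcome addition, since the paper's proof inverts $\mathrm{diag}(\mathbf{c})$ without comment.
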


\begin{proof}
	Let $\{\mathbf{v}_1, \ldots, \mathbf{v}_N\}$ be an eigenbasis of ${A}$ and $\{\lambda_1, \ldots, \lambda_N\}$ be the associated eigenvalues. Denote $\Lambda := \text{diag}(\lambda_1, \ldots, \lambda_N)$, $V := (\mathbf{v}_1|\mathbf{v}_2|\dots|\mathbf{v}_N)$, and $$V^{-1} = \left(\begin{array}{c}{\mathbf{{v}}}_1^\ast\\ \vdots\\ \mathbf{{v}}_N	^\ast\end{array}\right),$$ and notice that by the hypothesis of diagonalizability of $A$ one has $A = V \Lambda V^{-1}$. Using the eigenbasis of $A$, or using the columns of $V$, it holds for the input mask that $\mathbf{C} = \sum_{i=1}^N {c}_i \, \mathbf{{v}}_i$ with $\mathbf{c}:= ({c}_1, \ldots, {c}_N)^\top$  the vector of coefficients. We now recall that by \eqref{eq:Gx_eigenbasis}
	\begin{equation*}
		G_{\mathbf{x}}=\sum_{j=0}^\infty A^j \mathbf{C} \mathbf{C}^\top (A^j)^\top
		=
		\sum_{i,j = 1}^N \varphi_{i,j}\, \mathbf{{v}}_i \, \mathbf{v}_j^\ast ,
	\end{equation*}	
	with $\varphi_{i,j}:=({c}_i \overline{c}_j)/(1 - \lambda_i \overline{\lambda}_j)$, and hence it holds that
	\begin{align*}
		V^{-1} G_{\mathbf{x}} (V^\ast)^{-1} 
		& =
		\left( \sum_{i,j = 1}^N \varphi_{i,j} \left( \mathbf{{v}}_k^\ast\, \mathbf{{v}}_i \, \mathbf{{v}}_j^\ast\, \mathbf{{v}}_l \right) \right)_{k,l}^N 
		= \left( \varphi_{k,l} \right)_{k,l}^N.
	\end{align*}
	
	Finally, using this expression in \eqref{eq:MC_tau_AC}, we can write $\textnormal{MC}_\tau$ as follows:
	\begin{align}
		\label{MC_neutr}
		\textnormal{MC}_\tau 
		& =
		\mathbf{C}^\top (A^\tau)^\top {G}_{\mathbf{x}}^{-1} A^\tau \mathbf{C}=\mathbf{C}^\top  (V^\ast)^{-1} (\Lambda^\ast)^\tau V^\ast  G_{\mathbf{x}}^{-1}  V \Lambda^\tau V^{-1}  \mathbf{C}\nonumber \\
		& =
		\mathbf{C}^\top (V^{-1})^\ast   (\Lambda^\ast)^\tau \left( \left( \varphi_{k,l} \right)_{k,l}^N \right)^{-1} \Lambda^\tau  V^{-1} \mathbf{C} 
		=
		\mathbf{c}^\ast  (\Lambda^\ast)^\tau \left( \left( \varphi_{k,l} \right)_{k,l}^N \right)^{-1} \Lambda^\tau \mathbf{c} \nonumber
		\\
		&= \mathbf{c}^\ast  (\Lambda^\ast)^\tau \left( \textnormal{diag}\left( \mathbf{c} \right) \left( \frac{1}{1 - \lambda_k \overline{\lambda}_l} \right)_{k,l}^N \textnormal{diag}\left( \mathbf{c}^\ast \right) \right)^{-1} \Lambda^\tau \mathbf{c}\nonumber
		\\
		& =
		\mathbf{c}^\ast  (\Lambda^\ast)^\tau \textnormal{diag}\left( \mathbf{c}^\ast \right)^{-1} \bigg( \bigg( \frac{1}{1 - \lambda_k \overline{\lambda}_l} \bigg)_{k,l}^N \bigg)^{-1} \textnormal{diag}\left( \mathbf{c} \right)^{-1} \Lambda^\tau \mathbf{c}		 \nonumber\\
		&= \boldsymbol{\iota}_N^\top\, (\Lambda^\ast)^\tau \bigg( \bigg( \frac{1}{1 - \lambda_k \overline{\lambda}_l} \bigg)_{k,l}^N \bigg)^{-1} \Lambda^\tau\, \boldsymbol{\iota}_N ,
	\end{align}
	where $\boldsymbol{\iota}_N = (1, \ldots, 1)^\top \in \mathbb{R}^N$. The last equality in the derivation follows from the commutative property of the product of diagonal matrices. 
	Hence, $\textnormal{MC}_\tau$ is independent of  $\mathbf{C}$ for all $\tau \in \mathbb{N}$ under the stated assumptions.
\end{proof}

A complementary result in continuous time with stationary inputs was derived by \cite{Hermans2010}. To the best of our knowledge, the previous proposition is the first derivation of this property in the context of discrete-time models. A generalization of the memory neutrality for weakly stationary inputs (possibly autocorrelated) is given in Theorem \ref{theorem:stationary_memory_neutral} in Appendix \ref{appendix_A}. 

\subsubsection{Another Formula for Memory Capacity}

The proof of Proposition~\ref{prop:input_mask_neutral} offers another additional strategy that one may follow in order to compute memory capacities. Indeed, the resulting closed-form expression \eqref{MC_neutr} can be used to evaluate the memory curve. More precisely, for a chosen reservoir matrix $A$ it is sufficient to compute its eigendecomposition $A = V \Lambda V^{-1}$, then construct the matrix
\begin{equation*}
	{L}_A := \left( \frac{1}{1 - \lambda_k \overline{\lambda}_l} \right)_{k,l}^N ,
\end{equation*}
and finally compute
\begin{equation*}
	\textnormal{MC}_\tau = \boldsymbol{\iota}_N^\top\, (\Lambda^\ast)^\tau {L}_A^{-1} \Lambda^\tau\, \boldsymbol{\iota}_N .
\end{equation*}
Unfortunately, similarly to all the previous approaches, this strategy still exploits the structure of the spectrum of $A$ and may suffer from the same ill-conditioning issues. Simple simulations, which, for the sake of brevity, we do not report, immediately show that regular matrix distributions produce ${L}_A$ matrices with eigenvalues decaying as quickly as those of the respective ${G}_{\mathbf{x}}$. This makes the direct application of Proposition~\ref{prop:input_mask_neutral} for memory evaluation also an infeasible option.

Despite the fact that the result of the neutrality of the LESN memory with respect to the choice of the input mask in Proposition~\ref{prop:input_mask_neutral} yields no immediate numerical advantages, it is nevertheless at the origin of robust numerical techniques for empirical memory evaluation that we present in the following sections. More explicitly, we shall show how to use the memory neutrality property to design a memory capacity estimation procedure that recovers full memory in linear ESN models and is robust with respect to the numerical issues discussed in Section~\ref{section:naive}.

\subsection{Krylov Conditioning}
\label{Krylov Conditioning}
In Section~\ref{Naive Algebraic Method} we showed that the normalized covariance matrices $G_{\mathbf{x}}$ intervene in the computation of capacities $\textnormal{MC}_{\tau}$, $\tau\in \mathbb{N}$. We now explain how one of the sources of numerical problems in memory capacity evaluation is due to the poor conditioning of Krylov matrices that are implicitly used in numerical procedures when evaluating $G_{\mathbf{x}}$.     

For $N\in \mathbb{N}$, $A \in \mathbb{M}_N$, and $\mathbf{C} \in \mathbb{R}^N$ define the Krylov matrix
\begin{equation*}
	K := \left( \mathbf{C} \,|\, A \mathbf{C} \,|\, A^2 \mathbf{C} \,|\, \ldots \right) ,
\end{equation*}
which is infinite in the column dimension. Under the hypothesis $\rho(A)<1 $, Gelfand's formula  \citep{lax:functional:analysis} guarantees that there exists $k _0 \in \mathbb{N} $ such that $\vertiii{A^{k _0}}_{\infty} <1 $ and hence for any $\epsilon>0 $ there exists $k \in \mathbb{N}  $ such that $\vertiii{A^{k}}_{\infty} <\epsilon $. We can use this fact to truncate the matrix $K$ to $m$ columns so that $\| A^m \mathbf{C}\|_{\infty} < {eps}$, with  $eps$ denoting the double-precision of floating numbers of the researcher's numerical software. 
Therefore, when using numerical tools, the finite-dimensional $m$-column Krylov matrix is used. We denote this matrix by
\begin{equation}\label{eq:krylov_mat}
	K_m := \left( \mathbf{C} \,|\, A \mathbf{C} \,|\, A^2 \mathbf{C} \,|\, \ldots \,|\, A^{m-1} \mathbf{C}\right)
\end{equation}
and notice that ${G}_{\mathbf{x}}$ can be approximated by the product of finite-dimensional matrices,
\begin{equation}
	\label{G_approx}
	\widetilde{G}_{\mathbf{x}} = K_m K_m^\top .
\end{equation}
A useful factorization of the finite-dimensional Krylov matrices is given by the following result, which we adapt from  Lemma 2.4 in \cite{meurantKrylovMethodsNonsymmetric2020} using our notation.

\begin{lemma}\label{lemma:krylov_vandermonde}
	Let $A \in \mathbb{M}_N$ be diagonalizable with $A = V \Lambda V^{-1}$, where matrix $\Lambda$ is diagonal, and let ${\mathbf{c}} = V^{-1} \mathbf{C}$. Then, the Krylov matrices  $K_m$ defined in \eqref{eq:krylov_mat} can be factorized as
	\begin{equation}
		K_m = V \, D_{{\mathbf{c}}} \, W_m ,
	\end{equation}
	where $D_{{\mathbf{c}}} = \textnormal{diag}( {\mathbf{c}} )$ and $W_{m} \in \mathbb{M}_{N,m}$ is the Vandermonde matrix 
	\begin{equation*}
		W_m := \left(\begin{array}{cccc}
			1 & \lambda_1 & \cdots & \lambda_1^{m-1} \\
			1 & \lambda_2 & \cdots & \lambda_1^{m-1} \\
			\vdots & \vdots & \ddots & \vdots \\
			1 & \lambda_N & \cdots & \lambda_N^{m-1}
		\end{array} \right)
	\end{equation*}
	constructed using the eigenvalues of $A$.
\end{lemma}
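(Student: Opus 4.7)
The plan is to prove the factorization by showing that each column of $K_m$ can be rewritten as a corresponding column of $V D_{\mathbf{c}} W_m$, after which the identity follows column by column.

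First, I would use the diagonalization hypothesis $A = V \Lambda V^{-1}$ to rewrite the $(k+1)$-th column of $K_m$: since $A^k = V \Lambda^k V^{-1}$, we have
\begin{equation*}
    A^k \mathbf{C} \;=\; V \Lambda^k V^{-1} \mathbf{C} \;=\; V \Lambda^k \mathbf{c},
\end{equation*}
where I used the definition $\mathbf{c} = V^{-1} \mathbf{C}$.

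Next, I would exploit the fact that diagonal matrices commute pointwise with vectors in a simple way: since $\Lambda^k = \textnormal{diag}(\lambda_1^k, \ldots, \lambda_N^k)$, applying $\Lambda^k$ to $\mathbf{c}$ yields the vector with entries $\lambda_i^k c_i$. This can equivalently be written as
\begin{equation*}
    \Lambda^k \mathbf{c} \;=\; D_{\mathbf{c}} \, \boldsymbol{\lambda}^{(k)}, \qquad \text{where} \quad \boldsymbol{\lambda}^{(k)} := (\lambda_1^k, \ldots, \lambda_N^k)^\top,
\end{equation*}
because the $i$-th entry of both sides equals $c_i \lambda_i^k$. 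Therefore $A^k \mathbf{C} = V D_{\mathbf{c}} \boldsymbol{\lambda}^{(k)}$.

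Finally, I would assemble columns. By construction of the Vandermonde matrix $W_m$, its $(k+1)$-th column (for $k = 0, 1, \ldots, m-1$) is precisely $\boldsymbol{\lambda}^{(k)}$. Stacking the $m$ identities $A^k \mathbf{C} = V D_{\mathbf{c}} \boldsymbol{\lambda}^{(k)}$ side by side and factoring out the common left factor $V D_{\mathbf{c}}$ yields
\begin{equation*}
    K_m \;=\; \bigl( V D_{\mathbf{c}} \boldsymbol{\lambda}^{(0)} \,\big|\, V D_{\mathbf{c}} \boldsymbol{\lambda}^{(1)} \,\big|\, \cdots \,\big|\, V D_{\mathbf{c}} \boldsymbol{\lambda}^{(m-1)} \bigr) \;=\; V D_{\mathbf{c}} \, W_m,
\end{equation*}
which is the claimed identity. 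There is no real obstacle here; the proof is essentially a one-line computation once one notes the commutation identity $\Lambda^k \mathbf{c} = D_{\mathbf{c}} \boldsymbol{\lambda}^{(k)}$, so the write-up consists of stating the three lines above.
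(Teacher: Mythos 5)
Your proof is correct: the columnwise computation $A^k\mathbf{C} = V\Lambda^k\mathbf{c} = VD_{\mathbf{c}}\boldsymbol{\lambda}^{(k)}$, combined with the observation that $\boldsymbol{\lambda}^{(k)}$ is exactly the $(k+1)$-th column of $W_m$, establishes the factorization. The paper itself does not prove this lemma --- it only adapts the statement from Lemma 2.4 of the cited reference on Krylov methods --- but your argument is the standard verification one would write, and it is complete.
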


It is well-known that Krylov matrices are difficult to treat numerically. As pointed out in \cite{meurantKrylovMethodsNonsymmetric2020}, $K_m$ is often lacking in numerical rank when compared to the theoretical rank $N$, and, more importantly, it can have exponentially increasing conditioning number as $m$ grows. In our case, this phenomenon can be observed by noting that 
the eigenvalues of $\widetilde{G}_{\mathbf{x}}=K_m K_m^\top$ are the same as nonzero eigenvalues of $K_m^\top K_m$ for which it holds that

\begin{equation*}
	K_m^\top K_m =  W_m^\ast \, D_{{\mathbf{c}}}^\ast V^\ast V D_{{\mathbf{c}}} \, W_m.
\end{equation*}
The right-hand side of this expression, under the assumption that $A$ is normal and $D_{{\mathbf{c}}} = \mathbb{I}_N$, results in the positive-definite Hankel matrix $  W_m^\ast W_m$. \cite{tyrtyshnikovHowBadAre1994} proved that for real positive-definite Hankel matrices and general Krylov matrices the spectral condition number has exponential lower bounds in $m$, which means that $\widetilde{G}_{\mathbf{x}}$ can indeed be extremely ill-conditioned in many common setups.

\subsection{Memory Gaps and Krylov Subspace Squeezing}
As we already pointed out several times, Theorem 4.4 in 
\cite{RC15}  states that the total memory capacity \textnormal{MC} of a LESN equals
\begin{equation*}
	\textnormal{MC} = \text{rank}\{ K_N \},
\end{equation*}
with $K_N\in \mathbb{M}_N$ as in \eqref{eq:krylov_mat}. We refer to the discrepancy between this theoretical result and its numerical estimation as {\bfi memory gap}. The next paragraphs propose an explanation of why so often there is a disagreement between theoretical and empirically computed memory capacities. 

\subsubsection{Geometric Interpretation of Krylov Subspace Squeezing} 

We start by introducing the Krylov subspaces and their squeezing, which results in memory gaps in empirical exercises. We refer the reader to some interesting literature regarding the theory of Krylov subspace methods \citep{Bellalij2016}, its geometric aspects \citep{Eiermann2001}, and use for linear \citep{meurantKrylovMethodsNonsymmetric2020} and nonlinear systems \citep{Hashimoto2020}.

\begin{definition}[Krylov subspace]
	The {\bfi $\bm{j}$th-order Krylov subspace} generated by a matrix $A\in \mathbb{M}_N$ and a vector $\mathbf{C}\in \mathbb{R}^N$ is the linear subspace of $\mathbb{R}^N $ given by
	$$
	\mathcal{K}_j(A, \mathbf{C})=\operatorname{span}\left\{\mathbf{C}, A \mathbf{C}, A^2 \mathbf{C}, \ldots, A^{j-1} \mathbf{C}\right\}.
	$$	
\end{definition}

Let now $N$ be large and consider the $QR$ decomposition of the Krylov matrix 
\begin{equation*}
	K_N
	= \left( \mathbf{q}_1| \mathbf{q}_2| \ldots| \mathbf{q}_{N} \right) \left(\begin{array}{cccc}
		r_{1,1} & r_{1,2} & \ldots & r_{1,N} \\
		0 &  r_{2,2} & \ldots & r_{2,N} \\
		\vdots & \vdots& \ddots & \vdots \\
		0& 0& \ldots& r_{N,N}
	\end{array}\right)
	= Q R .
\end{equation*}
If $Q$ and $R$ are obtained via Gram-Schmidt orthogonalization, then the diagonal entries of $R$ have a clear geometric interpretation: each $r_{j,j}$ represents the norm of the orthogonal component in vector $A^j \mathbf{C}$ with respect to the subspace spanned by the columns of the Krylov matrix $K_{j}$, or equivalently $\mathcal{K}_{j}(A, \mathbf{C})$. Here, we ignore the fact that the Gram-Schmidt implementation of $QR$ is numerically unstable and instead focus on the fact that matrix $R$ inherits the rank structure of $K_N$. 

In practice, we observe that the size of $r_{j,j}$ decays \textit{superexponentially} compared to the decay of powers of $\rho(A)$, a phenomenon that we term {\bfi  Krylov subspace squeezing}. This means that for $A$ with $\rho(A)<1$ and with large enough $N$ there exists a positive integer $\ell < N$ such that numerically
\begin{equation*}
	R \approx \left(\begin{array}{cc}
		R_1 & R_2 \\
		\mathbb{O}_{N-\ell, \ell} & \mathbb{O}_{N-\ell, N-\ell}
	\end{array}\right)
	.
\end{equation*}
This implies that na\"ive methods, which do not control for the ill-conditioning of ${G}_{\mathbf{x}}$, lead to the incorrect estimate
\begin{equation*}
	\textnormal{MC} = \text{rank}\{ R \} \approx \ell .
\end{equation*}
We construct a simulation to showcase the Krylov subspace squeezing phenomenon for commonly chosen distributions of the reservoir connectivity matrix $A$. 

Let $j\in \mathbb{N}$ and denote as $\boldsymbol{\theta}_{j+1}={\rm perp}_{\mathcal{K}_{j}(A, \mathbf{C})}(A^j \mathbf{C}) \in  \mathcal{K}_{j}(A, \mathbf{C})^{\perp}$ the orthogonal component of $A^j \mathbf{C}$  with respect to  $\mathcal{K}_{j}(A, \mathbf{C})$, with $\|\boldsymbol{\theta}_{1}\| = \|\mathbf{C}\|$ and hence $\|{ \boldsymbol{\theta}_1} \|= 1$ due to normalization. To compute $\boldsymbol{\theta}_j$ in a robust fashion, we employ two different approaches. Firstly, one can use the {\bfi  Arnoldi iteration approach} \citep{arnoldi1951principle}, which is specially designed to handle the orthogonalization of Krylov iterations. Alternatively, one can define the projection $P_j^c: \mathbb{R}^N\longrightarrow \mathcal{K}_j(A, \mathbf{C})$, or, equivalently, $P_j^c: \mathbb{R}^N\longrightarrow \mathcal{C}({K}_j)$, with the corresponding projection matrix $P_j^c = K_j(K_j^\top K_j)^{-1}K_j^\top$. Additionally, we may take the singular value decomposition of $K_j$ given by
\begin{equation}\label{eq:krylov_svd}
	K_j = U_j \, \Sigma_j \, W_j^\top,
\end{equation}
where the columns of $U_j\in \mathbb{M}_{N,j}$ and $W_j\in \mathbb{M}_{j}$ are the orthonormal left-singular and right-singular vectors of $K_j$, respectively, and $\Sigma_j\in \mathbb{M}_{j}$ with $j$ singular values of $K_j$ on its diagonal, respectively. Hence, one obtains that the orthogonal components $\boldsymbol{\theta}_j$  for every $1 \leq j \leq N$ have the norm
\begin{align*}
	\| \boldsymbol{\theta}_{j+1} \| 
	&= \|{  \, (\mathbb{I}_N - P_j^c) A^j \mathbf{C}}\|=\|{  \, (\mathbb{I}_N - U_j \, \Sigma_j \, W_j^\top (W_j \Sigma_j U_j^\top U_j \, \Sigma_j \, W_j^\top)^{-1}W_j \Sigma_j U_j^\top) A^j \mathbf{C}}\|\\
	&=\|{  \, (\mathbb{I}_N - U_{j} U_{j}^\top) A^j \mathbf{C}}\|.
\end{align*}
We call this singular value decomposition approach the {\bfi orthogonal method}, as it explicitly removes dependence on the ill-conditioning that is now incorporated in the singular values matrices $\Sigma_j$.\footnote{An alternative option is, of course, to use a standard linear projection argument i.e. least-squares to compute $\boldsymbol{\theta}_j$. Due to the Krylov structure, however, this method is very ill-conditioned.} 

The results of our simulations with LESN models of size $N = 100$ are shown in Figure~\ref{fig:subs_krylov}, which also include a rank estimation of $K_N$. As one can notice, even with a logarithmic ordinate axis, the decay for most random sampling distributions is faster than exponential when compared to the powers of the leading eigenvalue. Only when using a random orthogonal matrix the decay of $\|{ \boldsymbol{\theta}_j} \|$ is close to $\rho(A)^{j-1}$, as shown in panel (d). 

Additionally, we make an important empirical observation: the value of $\|{ \boldsymbol{\theta}_j} \|$ as a function of $j$ is well approximated by the ordered cumulative product of the absolute values of eigenvalues of $A$. Our empirical finding can be seen in panels of Figure~\ref{fig:subs_krylov} by considering the dashed black line, which plots such cumulative eigenvalue product. This observation, combined with knowledge of the spectral properties of random matrices, allows getting a more precise understanding of the ill-conditioning of the reservoir autocovariance matrix, as we argue now.

\begin{figure}
	\centering
	\begin{subfigure}[b]{0.49\textwidth}
		\centering
		\includegraphics[width=\textwidth]{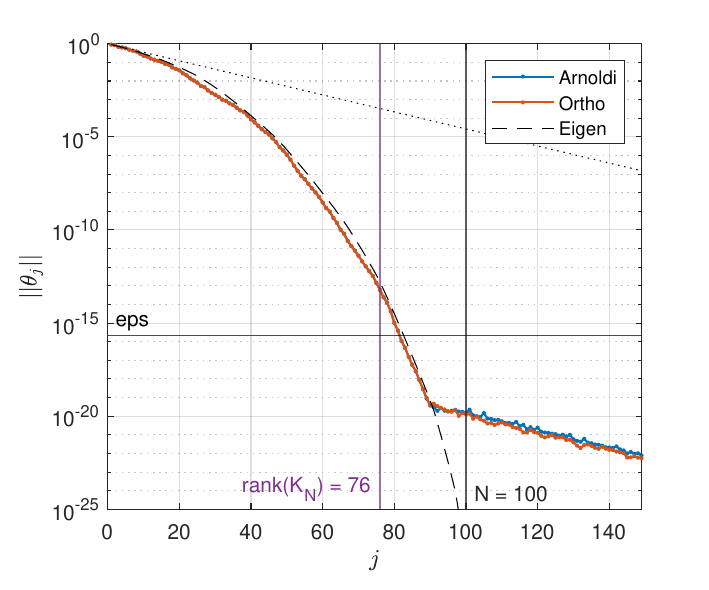}
		\caption{$A_{ij} \sim\ \text{i.i.d.}\ \mathcal{N}(0,1)$}
		\label{}
	\end{subfigure}
	\begin{subfigure}[b]{0.49\textwidth}
		\centering
		\includegraphics[width=\textwidth]{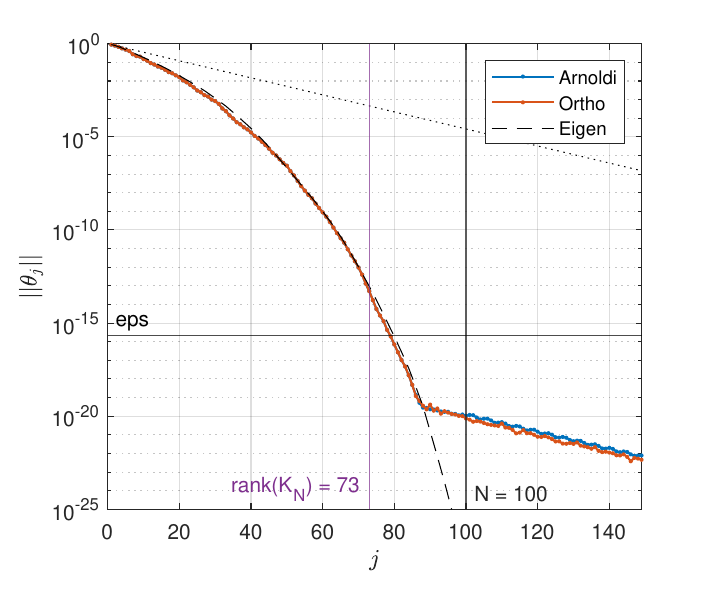}
		\caption{$A_{ij} \sim\ \text{i.i.d.}\ \mathcal{U}(-1,1)$}
		\label{}
	\end{subfigure}
	\\[15pt]
	\begin{subfigure}[b]{0.49\textwidth}
		\centering
		\includegraphics[width=\textwidth]{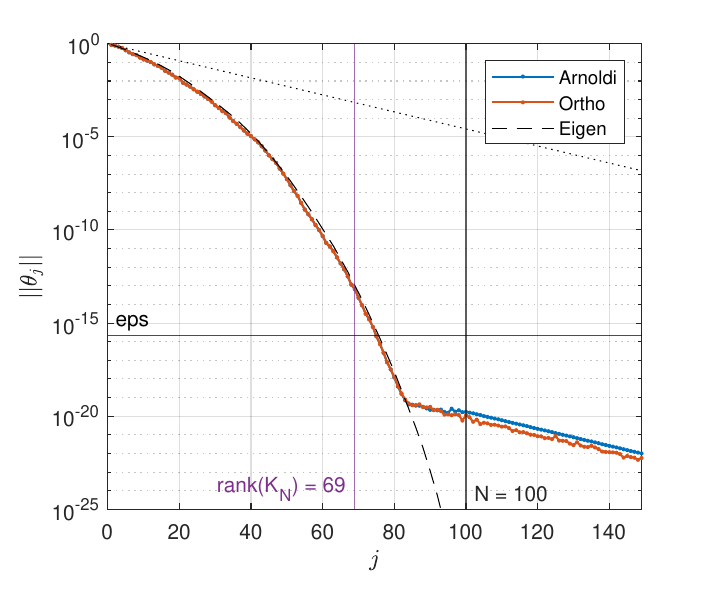}
		\caption{$A_{ij} \sim\ \text{i.i.d.}\ sp\mathcal{N}(0,1,0.1)$}
		\label{}
	\end{subfigure}
	\begin{subfigure}[b]{0.49\textwidth}
		\centering
		\includegraphics[width=\textwidth]{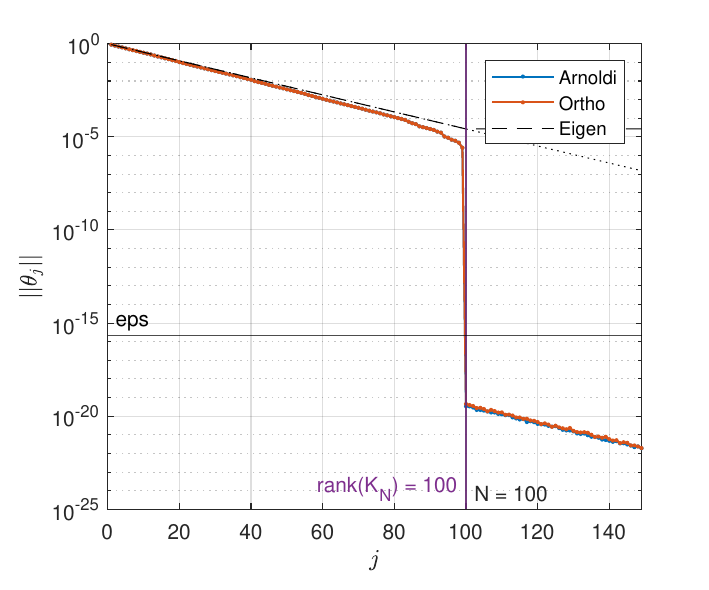}
		\caption{$A \sim\ \mathcal{O}(\mathcal{N}(0,1))$}
		\label{}
	\end{subfigure}
	\vspace{1em}
	\caption{Krylov subspace squeezing effects as measured using the norm of the orthogonal component for reservoir matrix $A = (A_{ij}) \in \mathbb{M}_N$, $\rho(A) = 0.9$, sampled $\mathcal{N}(0,1)$ in (a),  $\mathcal{U}(-1,1)$ in (b), sparse standard Gaussian with the degree of sparsity $0.1$, $sp\mathcal{N}(0,1,0.1)$, in (c), and orthogonal standard Gaussian in (d), and for Krylov matrix $K_m \in \mathbb{M}_{N, m}$, where in all plots $N = 100$ and $m = 5N$. Input mask is $\mathbf{C} = \boldsymbol{\iota}_N=(1, \ldots, 1)^\top \in \mathbb{R}^N$. The black \textit{dotted} line shows the exponential decay of leading eigenvalue $\rho(A)$, while the black \textit{dashed} line illustrates the approximate decay law derived using random matrix theory in Section~\ref{subsection:rmt_insights}. A solid black horizontal line shows the numerical double-precision of floating numbers in MATLAB, $eps =2^{-52}\approx 2.22\times 10^{-16}$.}
	\label{fig:subs_krylov}
\end{figure}

\subsubsection{Random Matrix Theory Insights}
\label{subsection:rmt_insights}

A fundamental result of random matrix theory (RMT) is the celebrated \textit{circular law}, which broadly speaking states that the (appropriately scaled) eigenvalues of families of random matrices are asymptotically uniformly distributed on the complex unit circle as $N \to \infty$ (see \citealt{taoTopicsRandomMatrix2012a} for an introductory discussion). A general statement of the circular law for ensembles of matrices with i.i.d.~entries with unit variance was given by \cite{taoRandomMatricesUniversality2010}. Matrix ensembles with {sparse} entries also obey the circular law, as proven by \cite{woodUniversalityCircularLaw2012} and \cite{basakCircularLawSparse2019a}. In particular, the degree of sparsity controls the probability of singularity, although in appropriate settings such probability remains exponentially small \citep{basakInvertibilitySparseNonHermitian2017}. Figure~\ref{fig:A_eigvals} in the Appendix shows the distribution of eigenvalues for commonly used ESN reservoir matrices. Notice that for ensembles of Gaussian, sparse Gaussian, and uniform entries, the associated eigenvalues have a close-to-uniform distribution on the complex unit circle. We highlight that attempts to use RMT to gain insights on reservoir models have already been made: \cite{zhang:echo} use the circular law to derive explicit bounds on spectral scaling factors; \cite{couillet2016Proc} and \cite{linearESN} apply results from random matrix theory to make performance analyses of linear echo state networks effected by exogenous noise. However, to the best of our knowledge, our empirical observations are new.

With the circular law in mind, in linear reservoirs where $A$ is drawn randomly from standard matrix ensembles, we know that for its eigenvalues it approximately holds that $|\lambda_i|^2 \sim \mathcal{U}(0, \rho(A))$, $i\in \{1,\ldots, N\}$. We can thus derive the following closed-form approximation, call it $\kappa_j$, for the value of $\| \boldsymbol{\theta}_j \|$, given by $\kappa_1=1$ and 
\begin{equation*}
	\kappa_{j+1} = \sqrt{\rho(A) \, \frac{N !}{N^{j} (N-j) !}} \,,  \enspace \forall j\geq 1.
\end{equation*}
This expression can be derived easily by noting that $|\lambda_i|$ are approximately distributed as $\sqrt{\rho(A) Z_i}$ where $Z_i \sim \mathcal{U}(0,1)$. When $N$ is large, we may further approximate realizations $( Z_i )_{i=1}^N$ with a uniform grid of knots over $(0, 1)$. Computing the cumulative product of these knots in descending order gives the formula for $\kappa_j$ above. It shows that the decay of $\|{ \boldsymbol{\theta}_j} \|$ can be indeed much faster than that of powers of $\rho(A)$ under the circular law. In theory, a sharper formula could be derived by noting that most eigenvalues of a random matrix come in conjugate complex pairs, so knots should also be chosen in couples. This would require knowing the expected ratio of real to complex eigenvalues of a random matrix ensemble, which is beyond the scope of this discussion. 
Yet, as shown with the dashed black lines in Figure~\ref{fig:subs_krylov}, we empirically find that our RMT approximation is remarkably precise at fitting the faster-than-exponential decay of $\|{ \boldsymbol{\theta}_j} \|$.

\subsection{Subspace Methods}

Now that we have discussed the ill-posed nature of inverse problems involving ${G}_{\mathbf{x}}$ and how it relates to its Krylov structure, we propose two approaches that can correctly recover the full memory capacity as well as individual lag-$\tau$ capacities. Our methods boil down to the idea of using appropriate matrix decompositions to remove those parts of the singular values spectrum from normalized reservoir autocovariance ${G}_{\mathbf{x}}$ that lead to its ill-conditioning. 

\subsubsection{Orthogonalized Subspace Method}
We start by recalling again the expression of the $\tau$-lag memory capacity of the LESN given in \eqref{eq:MC_tau_AC}, namely
\begin{equation}
	\label{eq:MC_tau_AC2}
	\textnormal{MC}_\tau =\mathbf{C}^\top (A^\tau)^\top {G}_{\mathbf{x}}^{-1} A^\tau \mathbf{C},
\end{equation}
where, as we explained  in Subsection~\ref{Krylov Conditioning}, ${G}_{\mathbf{x}}$ can be approximated by
\begin{equation}
	\widetilde{G}_{\mathbf{x}} = K_m K_m^\top
\end{equation}
with $K_m\in \mathbb{M}_{N,m}$ a Krylov matrix with the column dimension truncated up to $m$ as in \eqref{eq:krylov_mat}. In this case, the approximate memory capacities in \eqref{eq:MC_tau_AC2} can be computed as the diagonal elements of the following matrix:
\begin{equation}
	\label{Pmr}
	P_m^r := K_m^\top (K_m K_m^\top )^{-1} K_m.
\end{equation}
It is easy to see that this is a projection matrix corresponding to the projection operator $P_m^r:\mathbb{R}^m\longrightarrow \mathcal{C}(K_m^\top)\subset  \mathbb{R}^m$.
Using the singular value decomposition $K_m = U_{m} \, \Sigma_{m} \, W_{m}^\top$, with $U_{m} \in \mathbb{M}_{N}$ full-rank orthogonal with the left-singular vectors of $K_m$ as columns, $\Sigma_{m} \in \mathbb{M}_{N}$ diagonal, and $W_{m} \in \mathbb{M}_{m,N}$  with the right-singular orthonormal vectors of $K_m$ as  columns (notice that this SVD is different from the one used in \eqref{eq:krylov_svd}), we write \eqref{Pmr} as
\begin{align*}
	P_m^r  
	& = W_{m} \Sigma_{m}  U_{m}^\top ( U_{m} \Sigma_{m} W_{m}^\top W_{m} \Sigma_{m}  U_{m}^\top )^{-1} U_{m} \Sigma W_{m}^\top  =  W_{m} W_{m}^\top .
\end{align*}
Therefore each $\tau$-lag memory capacity of the LESN, $1\leq \tau\leq m$, is well approximated by $(P_m^r)_{\tau,\tau}$.

It is important to underline that this method of memory capacity computation does not suffer from any of the previously mentioned matrix or linear system inversion issues, as it sidesteps the computation of ${G}_{\mathbf{x}}^{-1}$ altogether. The core idea is to explicitly exploit the subspace structure of the Krylov matrix $K_m$ and, by using the singular value decomposition, to extract the projection matrix associated with the LESN memory capacity. We term this approach the {\bfi  orthogonalized subspace method} (OSM) and define
\begin{equation}\label{eq:MC_subsp}
	\text{MC}_\tau^{\rm OSM} = ( W_{m} W_{m}^\top )_{\tau,\tau}.
\end{equation}
Figures \ref{fig:subspace_methods_compare}-\ref{fig:subspace_methods_compare_more} show that the {orthogonalized subspace} method computes memory curves consistent with full memory. One also notices one of the downsides of this method when inspecting the memory curves recovered by OSM in all the panels of Figure~\ref{fig:subspace_methods_compare} and in subfigure (a) and (b) of Figure~\ref{fig:subspace_methods_compare_more}. More precisely, OSM results in memory capacity curves that need not be monotonically decreasing. This is in contrast to the known monotonicity of memory proven in \cite{Jaeger:2002}. A reason for this is that, while the subspace methods avoid a costly and unstable matrix inversion, it still relies on the computation of singular value decomposition factors. Formula (\ref{eq:MC_subsp}) does not guarantee that the diagonal entries are numerically non-increasing. Monotonicity hinges on recursively identifying the largest leading singular direction at each step of the decomposition. Accordingly, the accuracy of the estimated $\text{MC}_\tau$ is tied to the accuracy of the singular value decomposition, and the ill-conditioning of $K_m$ still plays some role in it.

\subsubsection{Averaged Orthogonalized Subspace Method}

Finally, we propose an improved version of our subspace memory computation method that exploits the input mask memory neutrality result established in Proposition~\ref{prop:input_mask_neutral}. Our goal is to leverage this property to produce a better approximation of $\text{MC}_\tau$ that is also monotonic. 

We first notice that even though the expression of the true memory capacity $\text{MC}_\tau$ in \eqref{MC_neutr} does not depend on $\mathbf{C}$, its numerical computation with OSM in \eqref{eq:MC_subsp} is impacted by the input mask. To recover the true memory capacity out of \eqref{eq:MC_subsp}, one is ultimately interested in computing $\mathbb{E}_{\mathbf{C}}[( W_{m} W_{m}^\top )_{\tau,\tau}]$, $1\leq \tau \leq m$, $m\in \mathbb{N}$, which, by Proposition~\ref{prop:input_mask_neutral}, should not depend on a particular choice of the distribution $p_{\mathbf{C}}$ of the input mask. Although one can potentially choose any $p_{\mathbf{C}}$ that would allow one to evaluate this integral, we do not find obtaining its expression in a closed form feasible. Our proposal is to adhere to the sample estimator or the Monte Carlo estimator of $\mathbb{E}_{\mathbf{C}}[( W_{m} W_{m}^\top )_{\tau,\tau}]$, $1\leq \tau \leq m$, $m\in \mathbb{N}$ which we will call the {\bfi  averaged orthogonalized subspace method}, or simply {OSM+}.

More explicitly, consider a sample of  $L$ independent and identically distributed according to some arbitrary chosen law $p_{\mathbf{C}}$ input masks $\{\mathbf{C}^{(1)}, \ldots, \mathbf{C}^{(L)}\}$, and  using \eqref{eq:MC_subsp} construct the following memory capacity curve estimator:  
\begin{align}
	\text{MC}_{L,\tau}^{\rm OSM+} = \frac{1}{L} \sum_{\ell=1}^L \left( W_m^{(\ell)}{ W_m^{(\ell)}}^\top \right)_{\tau,\tau}, \enspace 1\leq \tau \leq m,
\end{align}
for which the weak law of large numbers implies that
\begin{align*}
	\text{MC}_{L,\tau}^{\rm OSM+} 
	\xrightarrow[L \rightarrow \infty]{ p}\,
	\mathbb{E}_{\mathbf{C}}[( W_{m} W_{m}^\top )_{\tau,\tau}], \enspace 1\leq \tau \leq m.
\end{align*}
As mentioned above, one of the key advantages of this construction is the fact that the OSM+ method allows choosing any type of $p_{\mathbf{C}}$ as long as the conditions of Proposition~\ref{prop:input_mask_neutral} are satisfied.
Moreover, OSM+ is straightforward to implement numerically, as shown by the pseudo-code in Algorithm~\ref{algorithm:osm_plus}. Note that construction of the Krylov matrix can be done iteratively, and therefore the most computationally expensive operation is the singular value decomposition.
Figures~\ref{fig:subspace_methods_compare} and \ref{fig:subspace_methods_compare_more} show that the averaged subspace memory curves produced by OSM+ are indeed monotonic, in contrast to the one-step subspace approximation produced by OSM. Here, we do not make any suggestion as for the choice of the distribution for the entries of $\mathbf{C}$ since Figure~\ref{fig:subspace_methods_compare} indicates that common choices yield very similar results for moderate resampling size $L = 1000$. 

We emphasize that the na\"ive memory capacity computation discussed in Section~\ref{Naive Algebraic Method} is able to recover full memory {\it only} for some particular choices of the connectivity architectures for which, by construction,  the ill-conditioning problem is not pronounced. Indeed, the subplots (c) and (d) in Figure~\ref{fig:subspace_methods_compare_more} indicate that all three methods, namely na\"ive, OSM, and OSM+, in these cases correctly quantify full memory of linear recurrent networks.

\begin{algorithm}[t]
	\caption{Averaged Orthogonalized Subspace Method (OSM+)}\label{algorithm:osm_plus}
	\SetKwComment{Comment}{\#~}{} 
	\SetKwFunction{zeros}{zeros}
	\SetKwFunction{zeros}{zeros}
	\SetKwFunction{samplerandmatrix}{sample\_rand\_matrix}
	\SetKwFunction{svd}{svd}
	\SetKwFunction{diag}{diag}
	\SetKwInOut{Input}{Input}
	\SetKwInOut{Output}{Output}
	\Input{Reservoir connectivity matrix $A \in \M_{N}$, distribution $p_{\mathbf{C}}$ of input matrix $\mathbf{C}$, Krylov matrix truncation order $m \in \N$, sampling budget $L \in \N$}
	\Output{Memory capacity curve $\text{MC}_\tau$ for $0 \leq \tau \leq m$}
	\BlankLine
	\texttt{mc\_curve} $\leftarrow$ \zeros{m+1, 1}\Comment*{initialize MC curve vector}  
	\For{$\ell \leftarrow 1$ \KwTo $L$}{
		$\mathbf{C}^{(\ell)} \leftarrow$ \samplerandmatrix{$p_{\mathbf{C}}$; N, 1}\Comment*{sample input matrix}
		$K_{m}^{(\ell)} \leftarrow (\, \mathbf{C}^{(\ell)} \,\vert\, A \mathbf{C}^{(\ell)} \,\vert\, A^2 \mathbf{C}^{(\ell)} \,\vert\, \ldots \,\vert\, A^{m-1} \mathbf{C}^{(\ell)} \,)$\Comment*{construct Krylov matrix}
		$U_m^{(\ell)}$\texttt{,}$\Sigma_m^{(\ell)}$\texttt{,}${W_m^{(\ell)}}^\top \leftarrow$ \svd{$K_{m}^{(\ell)}$}\;
		\texttt{mc\_curve} $\leftarrow$ \texttt{mc\_curve} + $L^{-1}$\diag{${W_m^{(\ell)}} {W_m^{(\ell)}}^\top$}\Comment*{update estimate}
	}
\end{algorithm}

\begin{figure}
	\centering
	\begin{subfigure}[b]{0.49\textwidth}
		\centering
		\includegraphics[width=\textwidth]{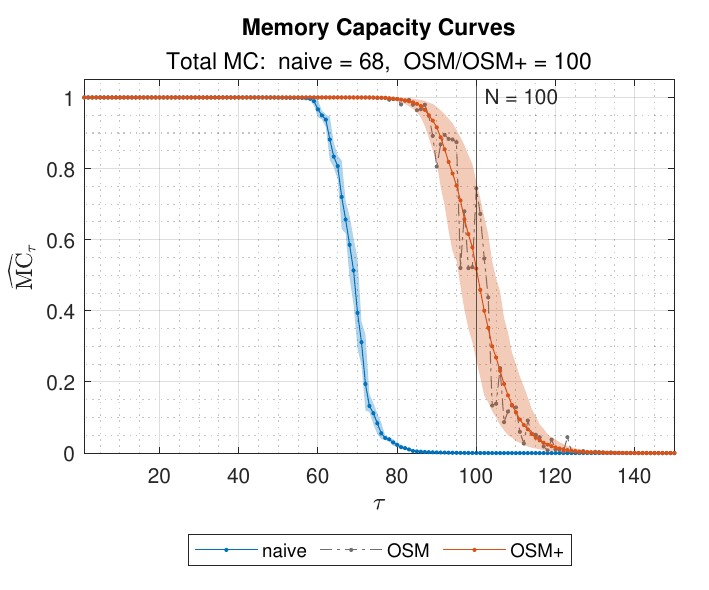}
		\caption{$\mathbf{C} \sim \text{i.i.d.}\ \mathcal{N}(0,1)$}
		\label{}
	\end{subfigure}
	\begin{subfigure}[b]{0.49\textwidth}
		\centering
		\includegraphics[width=\textwidth]{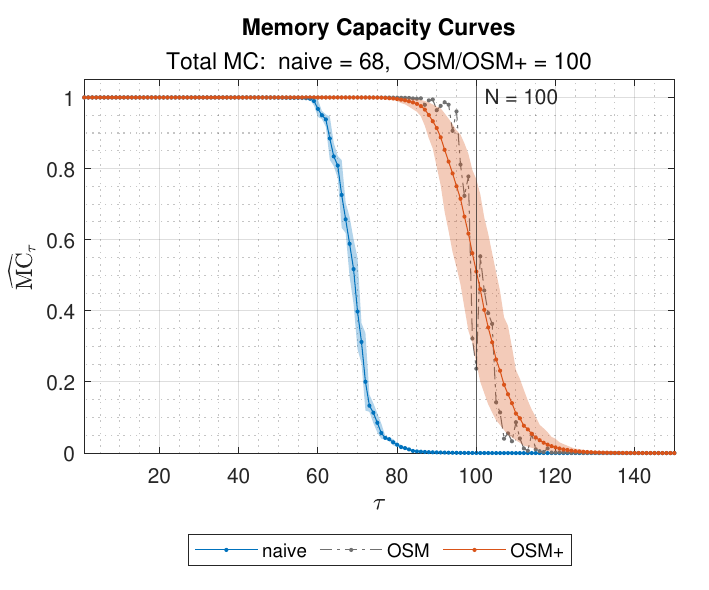}
		\caption{$\mathbf{C} \sim \text{i.i.d.}\ \mathcal{U}(-1,1)$}
		\label{}
	\end{subfigure}
	\\[15pt]
	\begin{subfigure}[b]{0.49\textwidth}
		\centering
		\includegraphics[width=\textwidth]{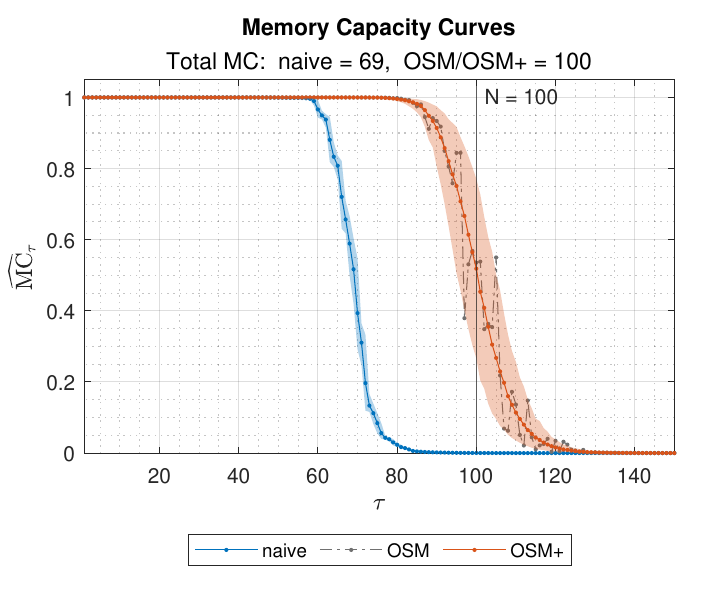}
		\caption{$\mathbf{C} \sim \text{i.i.d.}\ sp\mathcal{N}(0,1,0.1)$}
		\label{}
	\end{subfigure}
	\begin{subfigure}[b]{0.49\textwidth}
		\centering
		\includegraphics[width=\textwidth]{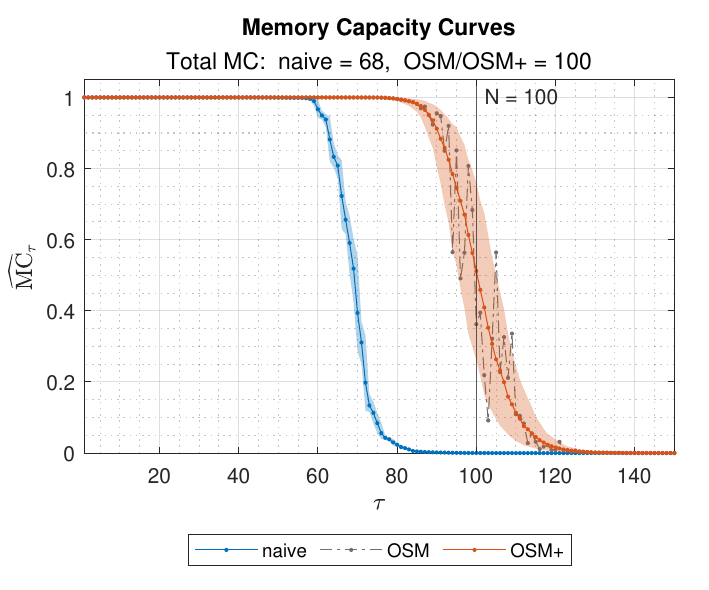}
		\caption{$\mathbf{C} \sim \text{i.i.d.}\ sp\,\mathcal{U}(0,1,0.1)$}
		\label{}
	\end{subfigure}
	\vspace{1em}
	\caption{Memory capacity curves of LESNs with connectivity matrix $A = (A_{ij}) \in \mathbb{M}_N$ with $\rho(A) = 0.9$. In all panels $A_{i,j}$ are sampled as i.i.d. degree $0.1$ sparse standard normal, $\ sp\mathcal{N}(0,1,0.1)$, and the input mask $\mathbf{C} = (c_{i}) \in \mathbb{R}^N$ is sampled as $\mathcal{N}(0,1)$ in (a), $\mathcal{U}(-1,1)$ in (b), degree $0.1$ sparse Gaussian, $sp\mathcal{N}(0,1,0.1)$, in (c), and degree $0.1$ sparse uniform, $sp\,\mathcal{U}(0,1,0.1)$, in (d). $\mathbf{C}$ is normalized after sampling to have a unit norm. Total MC is estimated as the sum of $\text{MC}_\tau$'s up to $1.5 N$ terms. For OSM+ the input mask $\mathbf{C}$ is resampled $L = 1000$ times to compute the average memory curve (lines) and $90\%$ frequency bands for $\text{MC}_\tau$ (shaded).  }
	\label{fig:subspace_methods_compare}
\end{figure}

\begin{figure}
	\centering
	\begin{subfigure}[b]{0.49\textwidth}
		\centering
		\includegraphics[width=\textwidth]{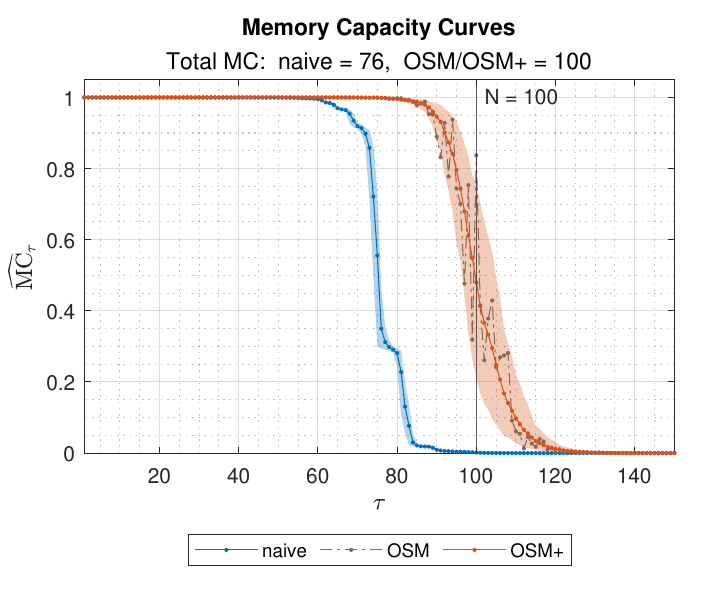}
		\caption{${A} \sim \text{i.i.d.}\ \mathcal{N}(0,1)$}
		\label{}
	\end{subfigure}
	\begin{subfigure}[b]{0.49\textwidth}
		\centering
		\includegraphics[width=\textwidth]{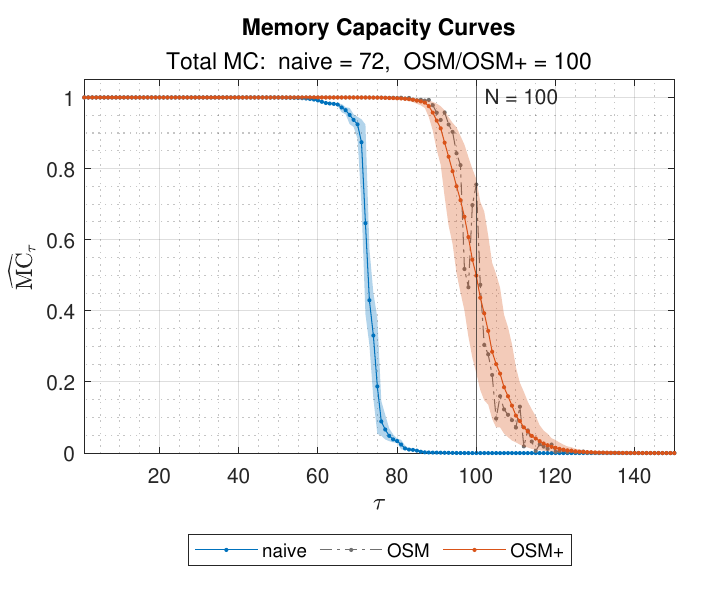}
		\caption{${A} \sim \text{i.i.d.}\ \mathcal{U}(-1,1)$}
		\label{}
	\end{subfigure}
	\\[15pt]
	\begin{subfigure}[b]{0.49\textwidth}
		\centering
		\includegraphics[width=\textwidth]{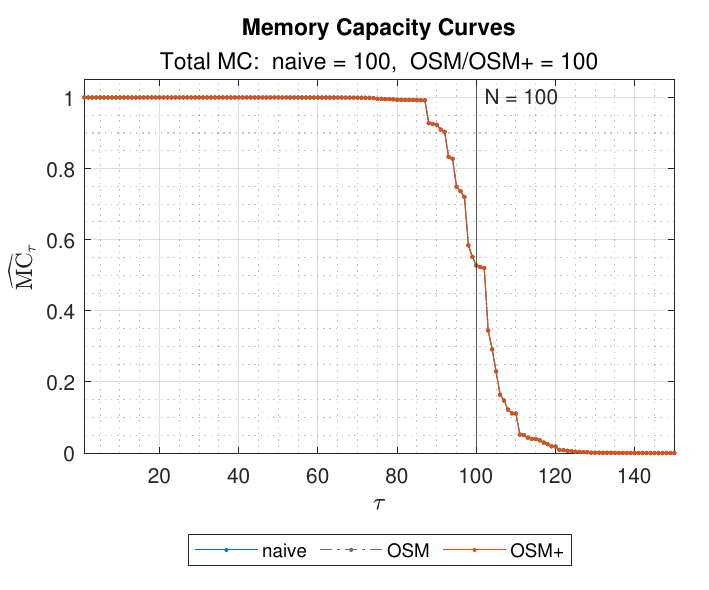}
		\caption{${A} \sim\ \mathcal{O}(\mathcal{N}(0,1))$}
		\label{}
	\end{subfigure}
	\begin{subfigure}[b]{0.49\textwidth}
		\centering
		\includegraphics[width=\textwidth]{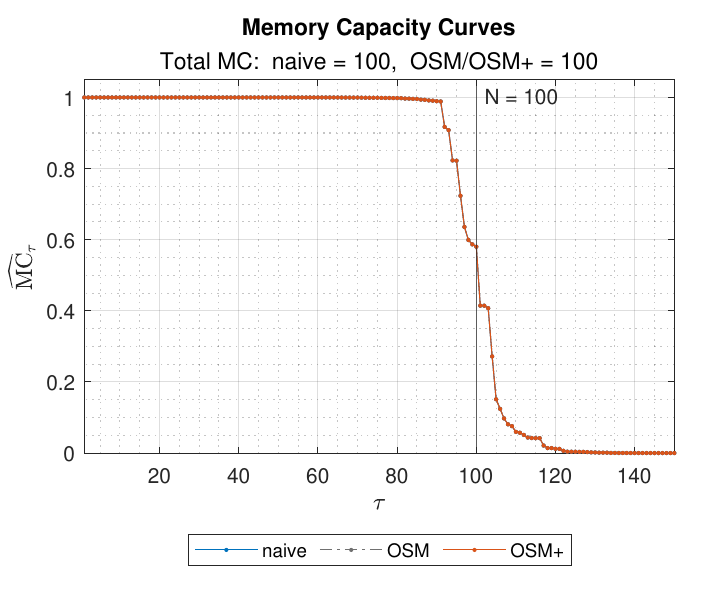}
		\caption{${A} \sim\ sp_C\mathcal{N}(0,1,0.1,0.7)$}
		\label{}
	\end{subfigure}
	\vspace{1em}
	\caption{Memory capacity curves of LESNs with input mask $\mathbf{C} = (c_{i}) \in \mathbb{R}^N$ and connectivity matrix $A = (A_{ij}) \in \mathbb{M}_N$, $\rho(A) = 0.9$, sampled from different standard distributions (in panel (d) $sp_C\mathcal{N}(0,1,0.1,0.7)$ stands for sparse standard Gaussian with sparsity degree $0.1$ and condition number $0.7$). In all panels $c_{i} \sim\ \text{i.i.d.}\ sp\mathcal{N}(0,1,0.1)$. $\mathbf{C}$ is normalized after sampling to have a unit norm. Total MC is computed as the sum of $\text{MC}_\tau$'s up to $1.5N$ terms. For OSM+ the input mask $\mathbf{C}$ is resampled $L = 1000$ times to compute the average memory curve (lines) and $90\%$ frequency bands for $\text{MC}_\tau$ (shaded).
	}
	\label{fig:subspace_methods_compare_more}
\end{figure}

\section{Discussion}
\label{section:discussion}

Given the fact that in this paper we have recalled (Section~\ref{section:naive}) and newly introduced (Section~\ref{section:subspace}) many methods to compute the memory capacity of linear recurrent networks, specifically focusing on LESN models, we now wish to provide an overview of the key insights we have gathered by comparing them.

Simulations and na\"ive algebraic methods are both plagued by significant issues. In the former case, estimating moments of a stochastic process with simulated data always introduces some positive bias in the calculation of $\textnormal{MC}_\tau$, yielding memory curves that are inconsistent with the theoretical properties of memory. In the latter case, na\"ive algebraic applications of close-form formulas for memory eventually resort to inverting generally ill-poised covariance matrices, see \eqref{eq:Gx_series}--\eqref{eq:Gx_eigenbasis} and Figure~\ref{fig:G_x_eigenvalues}. The numerical instability in the inversion of $G_\mathbf{x}$ is the core issue with these approaches, and it is unavoidable by all techniques that directly rely on expression \eqref{eq:MC_tau_covvar}. Indeed, this means that simulation methods, too, are eventually impacted, as in extremely large simulations the conditioning of $\widehat{\textnormal{Var}}(\mathbf{x}_t)$ is close to that of $\Gamma_{\mathbf{x}}$, and thus, ultimately, $G_{\mathbf{x}}$.

While our OSM and, especially, OSM+ proposals are theoretically grounded and numerically well-conditioned approaches to estimating $\textnormal{MC}_\tau$, it is important also to mention that in this paper we do not provide theoretical results on the convergence properties of these algorithms. From a practical perspective, it would be interesting to derive a rate of convergence for subspace methods as $m \to \infty$ and $\tau$ is fixed or $\tau \to \infty$. The former seems easier, while the latter seems useful in providing a better understanding of how memory behaves at the ``state dimension boundary'', $\tau \approx N$, as $N \to \infty$, too. We leave these developments to future work.

Lastly, we mention how our results may be generalized to \textbf{\textit{forecasting capacity}}. Following again \cite{RC15}, recall that the forecasting capacity of an ESN is given by
\begin{equation}\label{eq:FC_tau_covvar}
	\text{FC}_\tau =
	\frac{\text{Cov}(z_t, \mathbf{x}_{t-\tau}) {\Gamma}_{\mathbf{x}}^{-1} \text{Cov}(\mathbf{x}_{t-\tau}, z_t)}
	{\text{Var}(z_t)}, \enspace \tau\in \mathbb{N}_+ ,
\end{equation}
where the states $\mathbf{x}_{t-\tau}$ are now lagged and not the inputs. Then, $\text{FC}_\tau$ is a linear measure of the predictability of future inputs with respect to the currently available state. Following our discussion above on the implications of relying on \eqref{eq:MC_tau_covvar}, computations of $\text{FC}_\tau$ based on either simulations or na\"ive algebraic derivations are bound to provide inherently poor results. Although it is, in principle, possible to extend our OSM(+) methods also to compute $\text{FC}_\tau$ in a robust fashion, Corollary~3.5 in \cite{RC15} proves that, for generic ESN models (not necessarily linear), if $(z_t)_{t \in \Z_-}$ is a sequence of independent random variables, then $\text{FC}_\tau = 0$ and thus $\textnormal{FC} := \sum_{\tau = 0}^\infty \text{FC}_\tau = 0$. Therefore, for other types of stochastic inputs, forecasting capacity should be analyzed not as an inherent property of the (L)ESN model, but rather as a quantity based on the interaction between the model and inputs.

\section{Conclusion}
\label{section:conclusion}

In this paper, we have provided an overview of the existing literature on memory capacity measures for recurrent neural networks and the approaches that have been extensively used in designing memory-optimal network architectures. 

We have focused on explaining and providing solutions for what we call the linear memory gap, which refers to the difference between empirically measured memory capacities and their provable theoretical values. We have demonstrated that this discrepancy arises due to numerical artifacts that have been overlooked in previous studies. 

We propose robust techniques for the accurate estimation of memory capacity, which result in full memory results for linear RNNs, as should be generically expected. Our findings suggest that previous efforts to optimize memory capacity for linear recurrent networks may have been plagued with numerical artifacts, leading to incorrect results. We base our findings on the fact that the capacities of linear systems are generically full, disregarding the particular choice of architecture. We also show that the memory capacity is neutral to the choice of the input mask. We propose two orthogonalized subspace methods that allow empirically recovering the full memory of linear systems and render results consistent with the theory. 

We hope, with this conclusive work, to close the door to forthcoming attempts at memory optimization for linear RNNs that are not justified from a theoretical point of view.

\vspace{2em} 

\subsection*{Acknowledgements}

GB and JPO thank the hospitality and the generosity of the University of St. Gallen, where part of this work was completed. LG thanks the Nanyang Technological University for the hospitality, which made some of this work possible.
GB thanks Konstantin Usevich for the helpful discussion of algebraic insights regarding the empirical conjecture on ordered eigenvalue products.	
JPO acknowledges financial support from the Nanyang Technological University (grant number 020870-00001) and the Swiss National Science Foundation (grant number 200021\_175801/1).

\newpage


\newpage

\appendix

\section{Memory Neutrality to Input Mask Under Stationarity}
\label{appendix_A}
We now show that we can generalize Proposition~\ref{prop:input_mask_neutral} to the case of weakly stationary input processes that are not necessarily white noises. This provides a discrete-time counterpart to the result in \cite{Hermans2010} and allows us to apply our memory estimation methods in more general setups in which just the stationarity of the input is needed.

\begin{theorem}
	\label{theorem:stationary_memory_neutral}
	Under the controllability assumptions of Proposition~\ref{prop:MC}, for any weakly stationary input $(z_t)_{t \in \mathbb{Z}}$ (not necessarily white noise), the memory of a linear echo state network is neutral to the choice of the input mask $\mathbf{C}$.
\end{theorem}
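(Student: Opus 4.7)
The plan is to mimic the eigendecomposition argument used for Proposition \ref{prop:input_mask_neutral}, but replacing white-noise moment identities with the corresponding series involving the autocovariance function $\gamma(h) := \mathrm{Cov}(z_t, z_{t-h})$ of the weakly stationary input. First, using the MA$(\infty)$ representation \eqref{solution state LESN} guaranteed by $\rho(A)<1$, I would write the relevant second-order quantities as
\begin{align*}
\Gamma_{\mathbf{x}} &= \sum_{j,k=0}^{\infty} \gamma(k-j)\, A^j \mathbf{C}\mathbf{C}^\top (A^k)^\top,\\
\mathrm{Cov}(\mathbf{x}_t, z_{t-\tau}) &= \sum_{j=0}^{\infty} \gamma(\tau-j)\, A^j \mathbf{C}.
\end{align*}
Absolute convergence of these series follows from $\rho(A)<1$ together with $|\gamma(h)|\le \gamma(0)$ (via Gelfand's formula, as in the proof of Proposition \ref{memory_capacity_estimator}), so no extra summability assumption on $\gamma$ is needed.

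Second, using the eigendecomposition $A = V\Lambda V^{-1}$ and the coordinate change $\mathbf{c} := V^{-1}\mathbf{C}$, I would factor out $V$ on the left and $V^{\ast}$ on the right. A direct entrywise computation yields
\begin{equation*}
\Gamma_{\mathbf{x}} \;=\; V\, D_{\mathbf{c}}\, F\, D_{\mathbf{c}}^{\ast}\, V^{\ast},\qquad \mathrm{Cov}(\mathbf{x}_t, z_{t-\tau}) \;=\; V\, D_{\mathbf{c}}\, \mathbf{g}_\tau,
\end{equation*}
where $D_{\mathbf{c}}=\mathrm{diag}(\mathbf{c})$, the matrix $F$ has entries $F_{p,q} = \sum_{j,k\ge 0} \gamma(k-j)\, \lambda_p^{\,j}\, \overline{\lambda_q}^{\,k}$, and the vector $\mathbf{g}_\tau$ has entries $(\mathbf{g}_\tau)_p = \sum_{j\ge 0} \gamma(\tau-j)\, \lambda_p^{\,j}$. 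Crucially, both $F$ and $\mathbf{g}_\tau$ depend only on the spectrum $\sigma(A)$ and on $\gamma$, but not on $\mathbf{C}$.

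Third, plugging these identities into \eqref{eq:MC_tau_covvar} and using that diagonal matrices commute with their own inverses, the $D_{\mathbf{c}}$ factors cancel pairwise:
\begin{equation*}
\mathrm{MC}_\tau \;=\; \frac{1}{\gamma(0)}\, \mathbf{g}_\tau^{\ast}\, D_{\mathbf{c}}^{\ast}\, \bigl(D_{\mathbf{c}}\, F\, D_{\mathbf{c}}^{\ast}\bigr)^{-1}\, D_{\mathbf{c}}\, \mathbf{g}_\tau \;=\; \frac{1}{\gamma(0)}\, \mathbf{g}_\tau^{\ast}\, F^{-1}\, \mathbf{g}_\tau,
\end{equation*}
which visibly does not depend on $\mathbf{C}$. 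Summing over $\tau$ gives the same neutrality for total MC.

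The main obstacle is justifying that $F$ is invertible in the colored-input regime, since the clean factorization $F_{p,q}=1/(1-\lambda_p\overline{\lambda_q})$ from the white-noise case no longer holds. My plan is to express $F$ as a double sum $F = \sum_{j,k}\gamma(k-j)\boldsymbol{\lambda}^{\,j}(\boldsymbol{\lambda}^{\,k})^{\ast}$ acting on the Vandermonde vectors $\boldsymbol{\lambda}^{\,j}=(\lambda_1^j,\dots,\lambda_N^j)^\top$, and then recognize that $\Gamma_{\mathbf{x}}=V D_{\mathbf{c}} F D_{\mathbf{c}}^{\ast} V^{\ast}$. Under the controllability condition of Proposition \ref{prop:MC} all entries of $\mathbf{c}$ are nonzero (so $D_{\mathbf{c}}$ is invertible), and the version of Proposition 4.3 of \cite{RC15} adapted to weakly stationary inputs (already implicit in the statement of Lemma 3.2 of \cite{RC15}) ensures $\Gamma_{\mathbf{x}}\succ 0$; invertibility of $F$ then follows from Sylvester's law of inertia applied to the similarity $\Gamma_{\mathbf{x}}\sim D_{\mathbf{c}} F D_{\mathbf{c}}^{\ast}$. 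With $F^{-1}$ well-defined, the cancellation argument goes through and the result is established.
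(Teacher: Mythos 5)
Your proposal is correct and follows essentially the same route as the paper's Appendix~A proof: both diagonalize $A$, express $\Gamma_{\mathbf{x}}$ and $\mathrm{Cov}(\mathbf{x}_t,z_{t-\tau})$ in the eigenbasis so that the input-mask coordinates appear only through $\mathrm{diag}(\mathbf{c})$ factors, and cancel them (your matrix $F$ is exactly the paper's $\bigl(h_{k,l}/(1-\lambda_k\overline{\lambda}_l)\bigr)_{k,l}$ after resumming the double series along diagonals $h=k-j$). Your added justification of the invertibility of $F$ via congruence with $\Gamma_{\mathbf{x}}$ is a point the paper leaves implicit, though note that both arguments ultimately rest on the standing nonsingularity assumption on $\Gamma_{\mathbf{x}}$ required for formula \eqref{eq:MC_tau_covvar}.
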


\begin{proof}
	We shall mimic the proof of Proposition~\ref{prop:input_mask_neutral}. We start by noticing that under the assumptions of the theorem, the stationarity of the input process implies stationarity of the associated states process $(\mathbf{x}_t)_{t\in \mathbb{Z}}$ as well as of the joint process $(z_{t+\tau}, \mathbf{x}_t)_{t \in \mathbb{Z}}$ for any $\tau \in \mathbb{N}$ (see Corollary 2.4 in \citealt{RC15}), and calculate
	\begin{align*}
		\text{Cov}(\mathbf{x}_{t}, {z}_{t+\tau}) 
		& =
		\text{Cov}(\mathbf{x}_{0}, {z}_{\tau}) 
		=
		\mathbb{E}\left[  \sum_{j=0}^\infty A^j \mathbf{C} {z}_{-j}  {z}_\tau \right] 
		= 
		\sum_{k=1}^N  \sum_{j=0}^\infty \lambda_k^j \mathbb{E}\left[{z}_{-j} {z}_\tau \right]  {c}_k \mathbf{{v}}_k  \\
		& = 
		\sum_{k=1}^N \left( \sum_{j=0}^\infty \lambda_k^j \gamma(\tau - j) \right) {c}_k \mathbf{{v}}_k  = 
		\sum_{k=1}^N g_k(\tau) {c}_k \mathbf{{v}}_k,
	\end{align*}
	The state autocovariance matrix is given by
	\begin{align*}
		\Gamma_{\mathbf{x}} 
		& =
		\mathbb{E}\left[ \left( \sum_{i=0}^\infty A^i \mathbf{C} z_{-i} \right) \left( \sum_{j=0}^\infty A^j \mathbf{C} z_{-j} \right)^\top \right] \\
		& =
		\mathbb{E}\left[ \sum_{i=0}^\infty A^i \mathbf{C} \mathbf{C}^\top (A^\top)^i z_{-i}^2 + \sum_{j \geq 1} \sum_{i=0}^\infty \left\{ A^{i} \mathbf{C} \mathbf{C}^\top (A^\top)^{i+j} + A^{i+j} \mathbf{C} \mathbf{C}^\top (A^\top)^{i} \right\} z_{-i} z_{-i-j}  \right] \\
		& =
		\sum_{i=0}^\infty A^i \mathbf{C} \mathbf{C}^\top (A^\top)^i \gamma(0) + \sum_{j \geq 1} \sum_{i=0}^\infty \left\{ A^{i} \mathbf{C} \mathbf{C}^\top (A^\top)^{i+j} + A^{i+j} \mathbf{C} \mathbf{C}^\top (A^\top)^{i} \right\} \gamma(j) .
	\end{align*}
	We now analyze all three summands separately:
	\begin{align*}
		\sum_{i=0}^\infty A^i \mathbf{C} \mathbf{C}^\top (A^\top)^i \gamma(0) &=
		\gamma(0)  \sum_{k,l=1}^N {c}_k \overline{{c}}_l \frac{1}{1 - \lambda_k \overline{\lambda}_l}  \mathbf{{v}}_k {\mathbf{{v}}}^\ast_l ,\\
		\sum_{i=0}^\infty A^{i} \mathbf{C} \mathbf{C}^\top (A^\top)^{i+j} \gamma(j) 
		& =
		\gamma(j) \sum_{k,l=1}^N {c}_k \overline{c}_l  \frac{\overline{\lambda}_l^j }{1 - \lambda_k \overline{\lambda}_l}  \mathbf{{v}}_k \mathbf{{v}}_l^\ast ,\\
		\sum_{i=0}^\infty A^{i+j} \mathbf{C} \mathbf{C}^\top (A^\top)^{i} \gamma(j)
		& =
		\gamma(j) \sum_{k,l=1}^N {c}_k \overline{c}_l  \frac{\lambda_k^i }{1 - \lambda_k \overline{\lambda}_l}  \mathbf{{v}}_k \mathbf{{v}}_l^\ast,
	\end{align*}
	and can simplify $\Gamma_{\mathbf{x}}$ as 
	\begin{align*}
		\Gamma_{\mathbf{x}} 
		& =
		\sum_{k,l=1}^N {c}_k \overline{c}_l \left( \frac{\gamma(0)}{1 - \lambda_k \overline{\lambda}_l} + \sum_{j \geq 1} \gamma(j) \frac{\lambda_k^j + \overline{\lambda}_l^j}{1 - \lambda_k \overline{\lambda}_l} \right) \mathbf{{v}}_k \mathbf{{v}}_l^\ast  = 
		\sum_{k,l=1}^N {c}_k \overline{c}_l \frac{h_{k,l}}{1 - \lambda_k \overline{\lambda}_l} \mathbf{{v}}_k \mathbf{{v}}_l^\ast .
	\end{align*}
	We can now mimic the derivation in the proof of Proposition~\ref{prop:input_mask_neutral} as follows
	\begin{align*}
		\textnormal{MC}_\tau 
		& =
		\gamma(0)^{-1} \textnormal{Cov}(\mathbf{x}_{t}, {z}_{t-\tau})^*  {\Gamma}_{\mathbf{x}}^{-1} \textnormal{Cov}(\mathbf{x}_{t}, {z}_{t-\tau})  \\
		& =
		\gamma(0)^{-1} \left( \sum_{k=1}^N  g_k(\tau) {c}_k \mathbf{{v}}_k \right)^\ast \left( \sum_{k,l=1}^N {c}_k \overline{c}_l \frac{h_{k,l}}{1 - \lambda_k \overline{\lambda}_l} \mathbf{{v}}_k \mathbf{{v}}_l^\ast \right)^{-1} \left(\sum_{k=1}^N  g_k(\tau) {c}_k \mathbf{{v}}_k \right)\\
		& =
		\gamma(0)^{-1} {\mathbf{c}}^\ast  {G(\tau)}^\ast \left( \textnormal{diag}\left( {\mathbf{c}} \right) \left( \frac{h_{k,l}}{1 - \lambda_k \overline{\lambda}_l} \right)_{k,l}^N \textnormal{diag}\left( {\mathbf{c}}^\ast \right) \right)^{-1} G(\tau)\, {\mathbf{c}}
		\\
		& =
		\boldsymbol{\iota}_N^\top \,  {G(\tau)}^\ast \left( \gamma(0) \left( \frac{h_{k,l}}{1 - \lambda_k \overline{\lambda}_l} \right)_{k,l}^N \right)^{-1} G(\tau) \, \boldsymbol{\iota}_N.
	\end{align*}
	Notice that the final expression does not depend on $\mathbf{C}$, which proves the neutrality of the memory capacity with respect to the choice of the input mask, as required. 
\end{proof}

\newpage
\section{Additional Plots}

\begin{figure}[h!]
	\centering
	\begin{subfigure}[b]{0.4\textwidth}
		\centering
		\includegraphics[width=\textwidth]{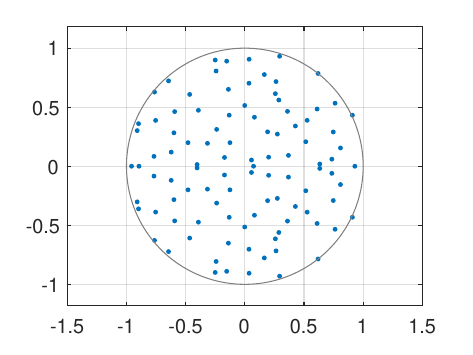}
		\caption{$A_{ij} \sim\ \text{i.i.d.}\ \mathcal{N}(0,1)$}
		\label{}
	\end{subfigure}
	\begin{subfigure}[b]{0.4\textwidth}
		\centering
		\includegraphics[width=\textwidth]{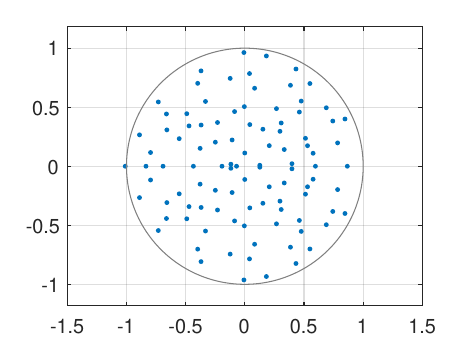}
		\caption{$A_{ij} \sim\ \text{i.i.d.}\ sp\mathcal{N}(0,1,0.1)$}
		\label{}
	\end{subfigure}
	\\[10pt]
	\begin{subfigure}[b]{0.4\textwidth}
		\centering
		\includegraphics[width=\textwidth]{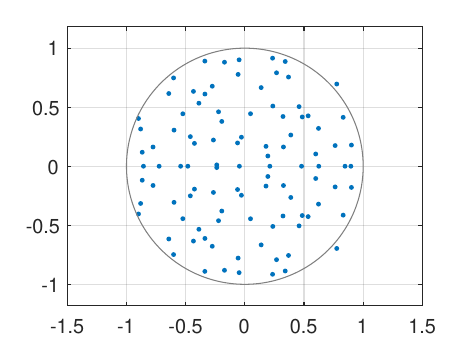}
		\caption{$A_{ij} \sim\ \text{i.i.d.}\ \mathcal{U}(-1,1)$}
		\label{}
	\end{subfigure}
	\begin{subfigure}[b]{0.4\textwidth}
		\centering
		\includegraphics[width=\textwidth]{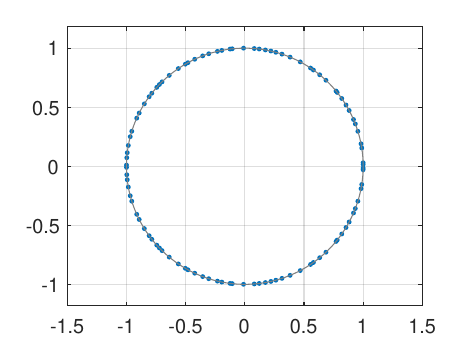}
		\caption{$A_{ij} \sim\ \mathcal{O}(\mathcal{N}(0,1))$}
		\label{}
	\end{subfigure}
	\\[10pt]
	\begin{subfigure}[b]{0.4\textwidth}
		\centering
		\includegraphics[width=\textwidth]{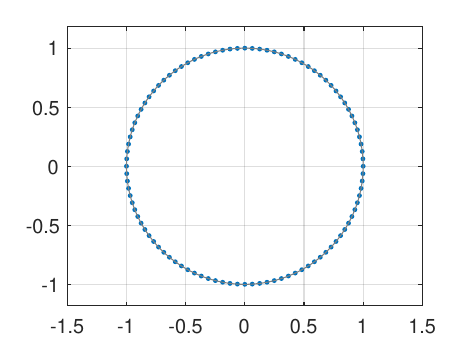}
		\caption{$A_{ij} \sim \text{cyclic}(N)$}
		\label{}
	\end{subfigure}
	\begin{subfigure}[b]{0.4\textwidth}
		\centering
		\includegraphics[width=\textwidth]{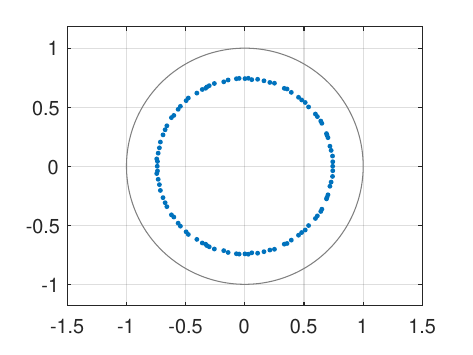}
		\caption{$A_{ij} \sim\ \text{i.i.d.}\ sp_C\mathcal{N}(0,1,0.1,0.7)$}
		\label{}
	\end{subfigure}
	\vspace{2em}
	\caption{Eigenvalues (blue) for random and non-random reservoir matrices and the complex unit circle (gray), $N = 100$. For specifications with entries $A_{ij} \sim \text{i.i.d.}\ \mathcal{N}$, $sp\mathcal{N}$ and $\mathcal{U}$ (upper row) the matrices are normalized according to the circular law rates ${N}^{-1/2}$, ${(0.1 N)}^{-1/2}$ and ${(N/3)}^{-1/2}$, respectively. In (f) $sp_C\mathcal{N}(0,1,0.1,0.7)$ stands for sparse standard Gaussian with sparsity degree $0.1$ and condition number $0.7$.}
	\label{fig:A_eigvals}
\end{figure}
\newpage

\vskip 0.2in

\bibliography{GOLibrary}


\end{document}